\documentclass{article}

\usepackage{microtype}
\usepackage{graphicx}
\usepackage{subcaption}
\usepackage{booktabs} 

\usepackage{hyperref}

\usepackage[accepted]{icml2026}

\usepackage{amsmath}
\usepackage{amssymb}
\usepackage{mathtools}
\usepackage{amsthm}

\usepackage[capitalize,noabbrev]{cleveref}

\theoremstyle{plain}
\newtheorem{theorem}{Theorem}[section]
\newtheorem{proposition}[theorem]{Proposition}
\newtheorem{lemma}[theorem]{Lemma}

\theoremstyle{definition}
\newtheorem{definition}[theorem]{Definition}
\newtheorem{assumption}[theorem]{Assumption}
\theoremstyle{remark}
\newtheorem{remark}[theorem]{Remark}

\usepackage[textsize=tiny]{todonotes}

\icmltitlerunning{\algname: Adaptive Unified Inference Scaling via Online Joint Optimization of Model Routing and Test-Time Scaling}

\usepackage{xspace}
\usepackage{multirow}
\usepackage{makecell}
\usepackage[table]{xcolor}
\usepackage{xurl}
\newcommand{\algname}{\textsc{UniScale}\xspace}

\begin{document}

\twocolumn[
  \icmltitle{\algname: Adaptive Unified Inference Scaling 
  via Online Joint Optimization of Model Routing and Test-Time Scaling}



  \icmlsetsymbol{equal}{*}

  \begin{icmlauthorlist}
    \icmlauthor{Kaiyu Huang}{tju,cuhksz-sribd}
    \icmlauthor{Xingyu Wang}{cuhksz-sds}
    \icmlauthor{Mingze Kong}{cuhksz-sds}
    \icmlauthor{Zhubo Shi}{tju}
    \icmlauthor{Yuqian Hou}{zju} \\
    \icmlauthor{Hong Xu}{cuhk}
    \icmlauthor{Zhongxiang Dai}{cuhksz-sribd,cuhksz-sds}
    \icmlauthor{Minchen Yu}{cuhksz-sribd,cuhksz-sds}
    \icmlauthor{Qingjiang Shi}{tju,cuhksz-sribd}
  \end{icmlauthorlist}

  \icmlaffiliation{tju}{School of Computer Science and Technology, Tongji University}
  \icmlaffiliation{cuhksz-sribd}{Shenzhen Research Institute of Big Data, The Chinese University of Hong Kong, Shenzhen}
  \icmlaffiliation{cuhksz-sds}{School of Data Science, The Chinese University of Hong Kong, Shenzhen}
  \icmlaffiliation{zju}{College of Information Science and Electronic Engineering, Zhejiang University}
  \icmlaffiliation{cuhk}{Department of Computer Science and Engineering, Chinese University of Hong Kong}

  \icmlcorrespondingauthor{Zhongxiang Dai}{daizhongxiang@cuhk.edu.cn}
  \icmlcorrespondingauthor{Minchen Yu}{yuminchen@cuhk.edu.cn}
  \icmlcorrespondingauthor{Qingjiang Shi}{shiqj@tongji.edu.cn}

  \icmlkeywords{Machine Learning, ICML}

  \vskip 0.3in
]



\printAffiliationsAndNotice{}  

\begin{abstract}
In real-world deployments of large language models (LLMs), balancing inference quality and computational cost has become a central challenge. 
Existing approaches tackle this trade-off along two largely independent dimensions: model routing, which switches among models of different scales to match request complexity, and test-time scaling (TTS), which adjusts inference-time compute within a fixed model for fine-grained control. 
However, this decoupled design introduces inherent limitations.
Model routing yields coarse-grained, discrete performance changes due to the sparse set of model scales, while single-model TTS often encounters capacity ceilings and exhibits diminishing returns as compute increases.
Moreover, treating the two mechanisms separately restricts adaptability in dynamic inference environments.
To overcome these limitations, we introduce \textit{Unified Inference Scaling (UIS)}, which unifies model routing and TTS in a single optimization space. 
Building on this formulation, we propose \algname, an online framework that models adaptive UIS as a contextual multi-armed bandit problem and learns inference policies via LinUCB. 
The framework incorporates efficiency-aware learning and cost modeling to ensure stable and scalable optimization over high-dimensional action spaces.
Evaluation shows that \algname effectively exploits the synergy in the UIS space to deliver a fine-grained and consistently better quality--cost trade-off across diverse, dynamic inference scenarios.

\end{abstract}

\begin{figure}[!t]
  \vskip 0.2in
  \begin{center}
    \centerline{\includegraphics[width=0.95\columnwidth]{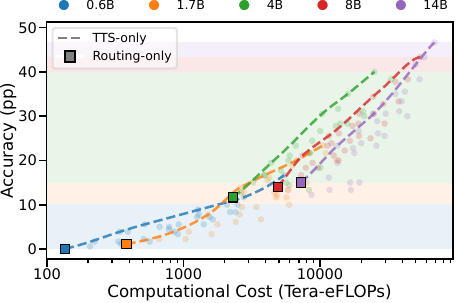}}
    \caption{Accuracy-cost trade-offs under the UIS space. Squares and dashed lines represent routing nodes and single-model TTS trajectories, respectively. By jointly optimizing across both dimensions, UIS enables an expressive quality–cost frontier.}
    \label{fig:uis}
  \end{center}
  \vskip -0.2in
\end{figure}

\section{Introduction}
\label{sec:intro}

In recent years, Large Language Models (LLMs) have demonstrated remarkable success across various tasks including complex reasoning, decision making, and multi-step problem solving \cite{guo2025deepseek,jaech2024openai,qwen2025qwq}. 
Real-world LLM applications span a wide spectrum of task difficulty, and meeting these requirements often demands substantial model capacity. 
However, higher capability typically incurs greater computational cost---either by invoking larger models with higher per-token latency and memory demands, or by increasing inference-time computation (e.g., longer decoding or additional test-time strategies).
These increased costs can raise response latency and resource consumption, which is particularly consequential for interactive and large-scale online services. Therefore, real-world LLM deployments must navigate a trade-off between inference quality and computational cost.

To balance quality and cost, existing approaches typically explore two largely independent dimensions, as shown in \cref{fig:uis}.
\textit{Model routing} methods \cite{feng2025graphrouter} select among models of different scales or capabilities, covering a broad quality--cost spectrum; however, they operate at a coarse granularity, as switching models induces discrete changes in both accuracy and cost. 
In contrast, \textit{Test-Time Scaling (TTS)} methods \cite{snell2025scaling} dynamically adapt the inference procedure of a given model at run time and provide fine-grained control, but they remain fundamentally constrained by the model’s intrinsic capacity.
Moreover, these two categories are often designed and configured in isolation, limiting their ability to operate in tandem in dynamic inference environments.

To achieve optimal quality--cost trade-offs in practice, an ideal approach should satisfy three requirements.
First, it should offer a broad optimization space spanning models of different scales, enabling it to handle queries with diverse difficulty and computational demands (i.e., \textit{broad coverage}). 
Second, it should provide fine-grained control over the inference procedure to realize precise quality--cost trade-offs (i.e., \textit{fine granularity}). Finally, because online deployments face shifting query distributions, user objectives, and model availability (i.e., environmental drift), the approach should be able to continually adjust its inference strategy over time (i.e., \textit{online adaptivity}).

To realize these requirements, we introduce \textit{Unified Inference Scaling (UIS)}, an inference paradigm that treats model routing and TTS not as independent knobs, but as a \textit{single, unified inference-time decision space}.
Under UIS, inference is parameterized by a configuration that jointly specifies the base model and its associated TTS strategy (see \cref{sec:uis}).
As illustrated in \cref{fig:uis}, the resulting set of UIS configurations forms a rich design space in which routing and TTS interact: TTS can narrow performance gaps between discrete model scales, while routing to larger models when needed mitigates the diminishing returns of increasingly aggressive TTS on smaller models.

Building on this formulation, we propose \algname, an online algorithm for solving the adaptive UIS problem.
We formulate UIS configuration selection as an online contextual bandit task \cite{li2010contextual}, which naturally captures low-latency decision making under non-stationary environments  (see \cref{sec:uniscale_as_ocb}).
\algname employs a Transformer-based encoder to extract query representations and uses the Linear Upper Confidence Bound (LinUCB) \cite{abbasi2011improved} algorithm to learn inference policies online (see \cref{sec:uniscale}), enabling continuous adaptation to environmental changes.
Optimizing UIS in practice poses significant challenges due to the high-dimensional, heterogeneous configuration space and the stringent latency constraints of online inference.
To ensure efficient and stable learning, we introduce three tightly integrated mechanisms (see \cref{sec:executing_inference,sec:evaluating_performance}):
(1) \textit{Path-Aware Early Exiting} dynamically identifies and terminates low-potential inference paths to significantly reduce computational costs while guaranteeing inference quality, thereby optimizing the runtime performance of all UIS configurations;
(2) \textit{Dense Verification Feedback} densifies sparse binary correctness signals by incorporating native verifier scores from TTS, providing more accurate quality assessments to guide the system in selecting UIS configurations with superior performance;
(3) \textit{UIS Cost Model} utilizes equivalent FLOPs (eFLOPs) to map computational and memory overhead into a unified metric \cite{sadhukhan2025kinetics}, ensuring consistency between the theoretical and actual costs of UIS configurations by providing accurate cost measurements.

Extensive experiments demonstrate that \algname consistently outperforms baseline strategies across multiple settings, including TTS selection under fixed models (\cref{sec:fixed_model}), model routing (\cref{sec:model_routing}), and full Unified Inference Scaling (\cref{sec:uis_experiment}).
 We further analyze \algname via comprehensive ablation studies to clarify the contribution of each component (\cref{sec:ablation_study}).

\section{Background and Problem Setting}
\label{sec:background_and_problem_setting}

\subsection{Foundations of Unified Inference Scaling}
\label{sec:uis}

\textbf{Model Routing.} 
Model routing aims to dynamically select the most appropriate model $M$ from a heterogeneous pool of models based on the input query. 
However, while model routing methods enable coverage over a broad quality--cost range, they operate at a coarse granularity: switching between models often induces discrete and significant jumps in both inference quality and computational cost.

\begin{figure}[!t]
  \vskip 0.2in
  \begin{center}
    \centerline{\includegraphics[width=0.95\columnwidth]{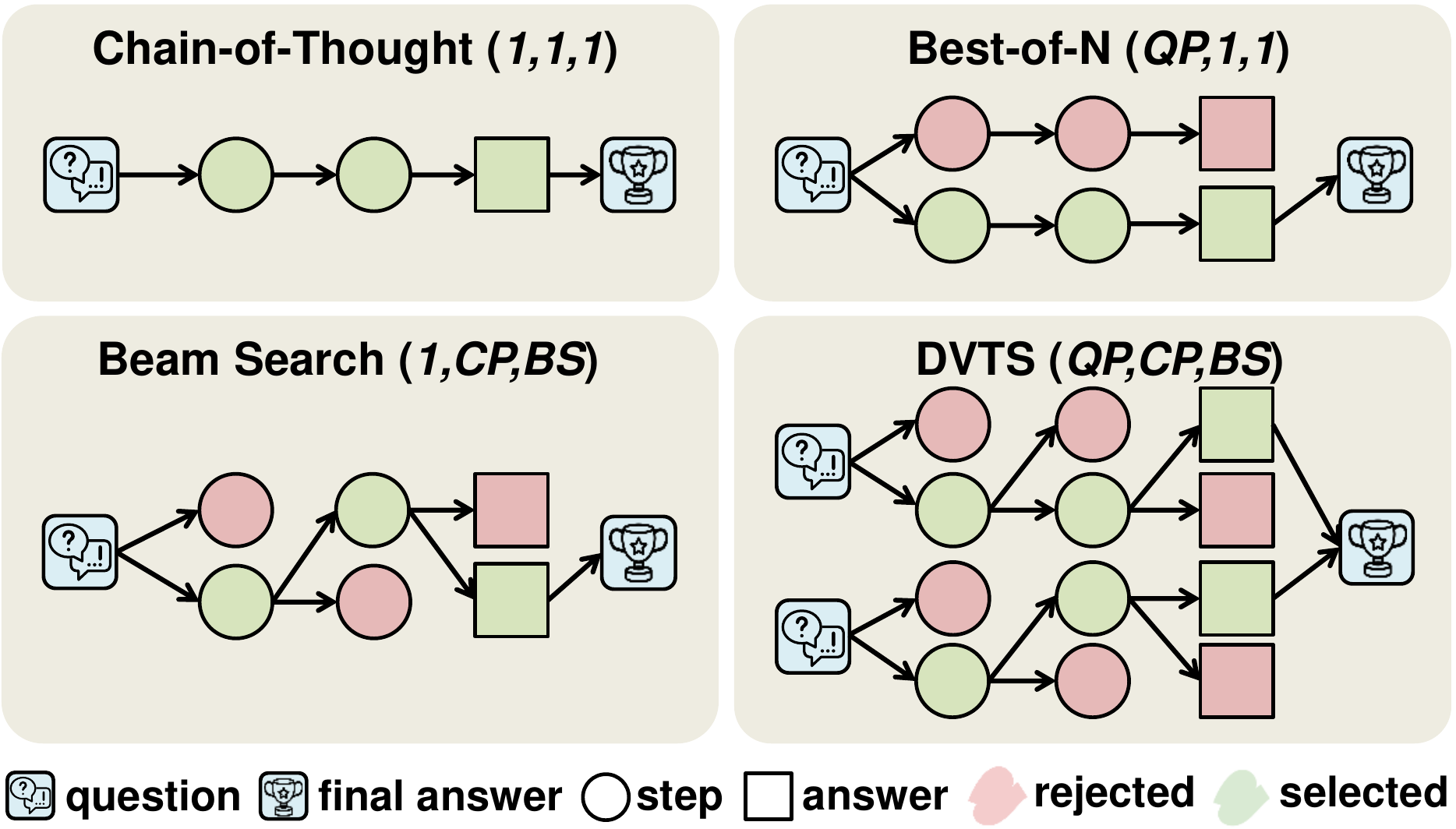}}
    \caption{Unified parameterization of TTS methods. Diverse TTS strategies are formalized via question parallelism ($QP$), candidate parallelism ($CP$), and beam size ($BS$).}
    \label{fig:tts}
  \end{center}
  \vskip -0.2in
\end{figure}

\textbf{Test-Time Scaling.} 
Test-Time Scaling (TTS) techniques enhance model reasoning by allocating additional inference-time computational budget \cite{wu2025inference}. 
Utilizing Best-of-N (BoN) \cite{cobbe2021training} or Process Reward Model (PRM)-based search (e.g., Beam Search and DVTS), TTS methods refine reasoning via iterative verification and path evaluation (details in \cref{sec:executing_inference}). 
As shown in \cref{fig:tts}, we formalize these strategies into a three-dimensional parameter space: \textit{Question Parallelism} ($QP$, the number of subtrees explored), \textit{Candidate Parallelism} ($CP$, the number of parallel samples per step), and \textit{Beam Size} ($BS$, the number of validated nodes retained). 
This parameterization allows for fine-grained control over quality--cost trade-offs, though it remains fundamentally upper-bounded by the base model's intrinsic capacity.

\textbf{Unified Inference Scaling.} 
To bridge these technical silos, we define Unified Inference Scaling (UIS) as a single unified inference-time decision space. 
We parameterize each inference execution through a joint configuration $(M, QP, CP, BS)$. 
This formulation constructs an expressive quality--cost frontier (see \cref{fig:uis}) by bridging discrete model gaps with search intensity and surpassing search plateaus through model routing. 
This unified space enables inference strategies to be precisely tailored to the unique logical complexity and resource constraints of each query.

\subsection{Adaptive Unified Inference Scaling via Bandits}
\label{sec:uniscale_as_ocb}

To achieve online adaptivity under environmental drift, we formulate the UIS configuration selection as an online optimization problem through the lens of \textit{contextual multi-armed bandits} \cite{li2010contextual}.

\textbf{Contextual Information.} 
For each arriving query $q_t$ at step $t$, the system observes a context vector $\mathbf{x}_t$. 
This vector serves as an abstract representation of the query's complexity, providing the necessary signal for the agent to estimate the potential utility of different UIS configurations.

\textbf{Unified Action Space.} 
We define a discretized action space $\mathcal{A}$, where each action $a \in \mathcal{A}$ corresponds to a specific UIS configuration $(M, QP, CP, BS)$. 
By treating routing and TTS as a joint action, \algname can capture the cross-dimensional dependencies that independent methods ignore.

\textbf{Optimization Objective.} 
Upon executing action $a_t$, the agent receives a reward $r_t$, which is characterized by a joint function of inference quality and computational cost. 
The agent's goal is to minimize the cumulative regret:
\begin{equation}
\label{eq:regret}
R_T = \mathbb{E} \left[ \sum_{t=1}^T \left( r(q_t, a_t^*) - r(q_t, a_t) \right) \right],
\end{equation}
where $a_t^*$ is the optimal configuration for context $\mathbf{x}_t$. By minimizing this regret, the algorithm learns an adaptive policy that consistently selects the optimal UIS configuration tailored to the evolving environment.

\begin{algorithm}[!t]
    \caption{\algname: Adaptive UIS via LinUCB}
    \label{alg:uniscale}
\begin{algorithmic}[1]
\STATE \textbf{Initialize:} $\mathbf{A}_0 \leftarrow \lambda \mathbf{I}$, $\mathbf{b}_0 \leftarrow \mathbf{0}$, $\mathbf{x}_{0,a_0} \leftarrow \mathbf{0}$, $r_0 \leftarrow 0$, action embeddings $\{\mathbf{s}_a\}_{a\in\mathcal{A}}$
\FOR{$t = 1$ to $T$}
    \STATE Update $\mathbf{A}_t \leftarrow \mathbf{A}_{t-1} +  \mathbf{x}_{t-1,a_{t-1}} \mathbf{x}_{t-1,a_{t-1}}^\top$
    \STATE Update $\mathbf{b}_t \leftarrow \mathbf{b}_{t-1} + r_{t-1} \mathbf{x}_{t-1,a_{t-1}}$
    \STATE Update $\hat{\boldsymbol{\theta}}_t \leftarrow \mathbf{A}_t^{-1} \mathbf{b}_t$
    \STATE Observe query $q_t$ and extract embedding $\mathbf{s}_{q_t}$
    \FORALL{$a \in \mathcal{A}$}
        \STATE $\mathbf{x}_{t,a} \leftarrow \mathrm{concat}(\mathbf{s}_{q_t}, \mathbf{s}_a)$
    \ENDFOR
    \STATE Select UIS configuration
    \[
        a_t = \arg \max_{a \in \mathcal{A}}
        \left(
            \hat{\boldsymbol{\theta}}_t^\top \mathbf{x}_{t,a}
            + \alpha \sqrt{\mathbf{x}_{t,a}^\top \mathbf{A}_t^{-1} \mathbf{x}_{t,a}}
        \right)
    \]
    \STATE Execute inference with configuration $a_t$
    \STATE Obtain reward $r_t$ according to \cref{eq:reward}
\ENDFOR
\end{algorithmic}
\end{algorithm}

\section{The \algname Framework}
\label{sec:uniscale}

\textbf{Overview.} 
\algname operates as an online closed-loop system (see \cref{alg:uniscale}) designed to navigate the joint UIS decision space. 
In each iteration $t$, the system updates its reward estimator $\hat{\boldsymbol{\theta}}_t$ based on historical feature-reward pairs (\cref{sec:updating_the_reward_estimator}). 
It then selects the optimal UIS configuration $a_t$ for the incoming query by maximizing the LinUCB acquisition function (\cref{sec:selecting_the_next_inference_configuration}). 
Following selection, \algname executes the inference procedure on base model $M_t$, utilizing path-aware early exiting to optimize the runtime quality--cost trade-off (\cref{sec:executing_inference}). 
Finally, the system evaluates the execution via dense verification feedback and the UIS cost model to yield a composite reward $r_t$ for continuous policy refinement (\cref{sec:evaluating_performance}).

\subsection{Updating the Reward Estimator}
\label{sec:updating_the_reward_estimator}

At the beginning of each iteration $t$, \algname updates the reward estimator using the feedback $(\mathbf{x}_{t-1, a_{t-1}}, r_{t-1})$ observed in the previous round. 
We formulate the expected reward as a linear relationship $\hat{r}_t = \langle \mathbf{x}_{t, a_t}, \boldsymbol{\theta} \rangle$, where $\boldsymbol{\theta}$ is a learnable parameter vector shared across the entire UIS action space. 
To refine this estimator, the system incrementally updates the Gram matrix $\mathbf{A}_t$ and the feature-reward vector $\mathbf{b}_t$ to derive the current parameter estimate $\hat{\boldsymbol{\theta}}_t$, as detailed in \cref{alg:uniscale} (Lines 3–5). 
In practice, we employ the Sherman–-Morrison formula to update $\mathbf{A}_t^{-1}$ via efficient rank-one updates. This avoids recomputing the inverse from scratch, reducing the per-round computational complexity from $\mathcal{O}(d^3)$ to $\mathcal{O}(d^2)$.
This incremental mechanism ensures that the reward estimator continuously adapts to environmental drift with minimal computational overhead.

\textbf{Joint Semantic Representation.} 
To capture the intrinsic alignment between user requirements and system capabilities, \algname maps queries and configurations into a shared latent space via a unified Transformer encoder. 
Specifically, we pre-compute the action semantic representations $\mathbf{s}_a$ based on the attribute descriptions of each UIS configuration (see \cref{app:action_attributes}), which provides rich prior knowledge for online decision-making. 
During the inference phase, the same encoder maps the incoming query $q_t$ to its semantic embedding $\mathbf{s}_{q_t}$ in real-time. 
These two components are then concatenated to form the joint semantic representation $\mathbf{x}_{t, a} = [\mathbf{s}_{q_t}; \mathbf{s}_a]$, providing a high-dimensional grounding for accurate reward estimation.

\textbf{Justification of Linear Reward Modeling.} 
The motivation for adopting a linear estimator rather than a non-linear neural architecture is twofold: 
on the one hand, modern Transformer encoders have already captured the bulk of complex non-linear semantic structures during the feature extraction stage \cite{hu2024localized}, rendering a linear mapping sufficient to characterize the relationship between rewards and features; 
on the other hand, linear estimators offer significant advantages in terms of computational efficiency and theoretical tractability, making them highly suitable for online inference and frequent updates in dynamic system environments. 

\subsection{Selecting the Next Configuration $a_t$}
\label{sec:selecting_the_next_inference_configuration}

Upon updating the reward estimator, we determine the optimal UIS configuration $a_t$ for the current query $q_t$ by maximizing the LinUCB acquisition function (\cref{eq:linucb}). 
Specifically, \algname utilizes the parameters $\hat{\boldsymbol{\theta}}_t$ of the current reward estimator to compute the acquisition values for each action within the candidate configuration set $\mathcal{A}$. 
The final UIS configuration $a_t$ is identified by selecting the action that yields the maximum value:
\begin{equation}
\label{eq:linucb}
  a_{t} = \arg \max_{a \in \mathcal{A}} \left( \mathbf{x}_{t, a}^{\top} \hat{\boldsymbol{\theta}}_{t} + \alpha \sqrt{\mathbf{x}_{t, a}^{\top} \mathbf{A}_{t}^{-1} \mathbf{x}_{t, a}} \right).
\end{equation}
Within this expression, $\mathbf{x}_{t, a}^{\top} \hat{\boldsymbol{\theta}}_{t}$ denotes the predicted reward for configuration $a$ under the given context. 
The term $\sqrt{\mathbf{x}_{t,a}^\top \mathbf{A}_t^{-1} \mathbf{x}_{t,a}}$ serves as a principled measure of uncertainty regarding the estimated value, which is derived from the Gram matrix $\mathbf{A}_t$ accumulated from historical observations (refer to \cref{app:linucb} for details).

By tuning the hyperparameter $\alpha$, \algname strikes a precise balance between two critical dimensions:
(1) \textit{Exploitation}: Leveraging historical observations to favor configurations with high predicted rewards.
(2) \textit{Exploration}: Encouraging an exhaustive search across the action space by prioritizing configurations with greater uncertainty.
The synergy between reward estimation and principled exploration enables \algname to converge rapidly within the high-dimensional UIS decision space. 
This ensures the consistent selection of configurations on the optimal quality--cost frontier.
We provide a detailed sensitivity analysis of the exploration factor $\alpha$ in \cref{app:sensitivity}.

\begin{figure}[!t]
  \vskip 0.2in
  \begin{center}
    \centerline{\includegraphics[width=0.9\columnwidth]{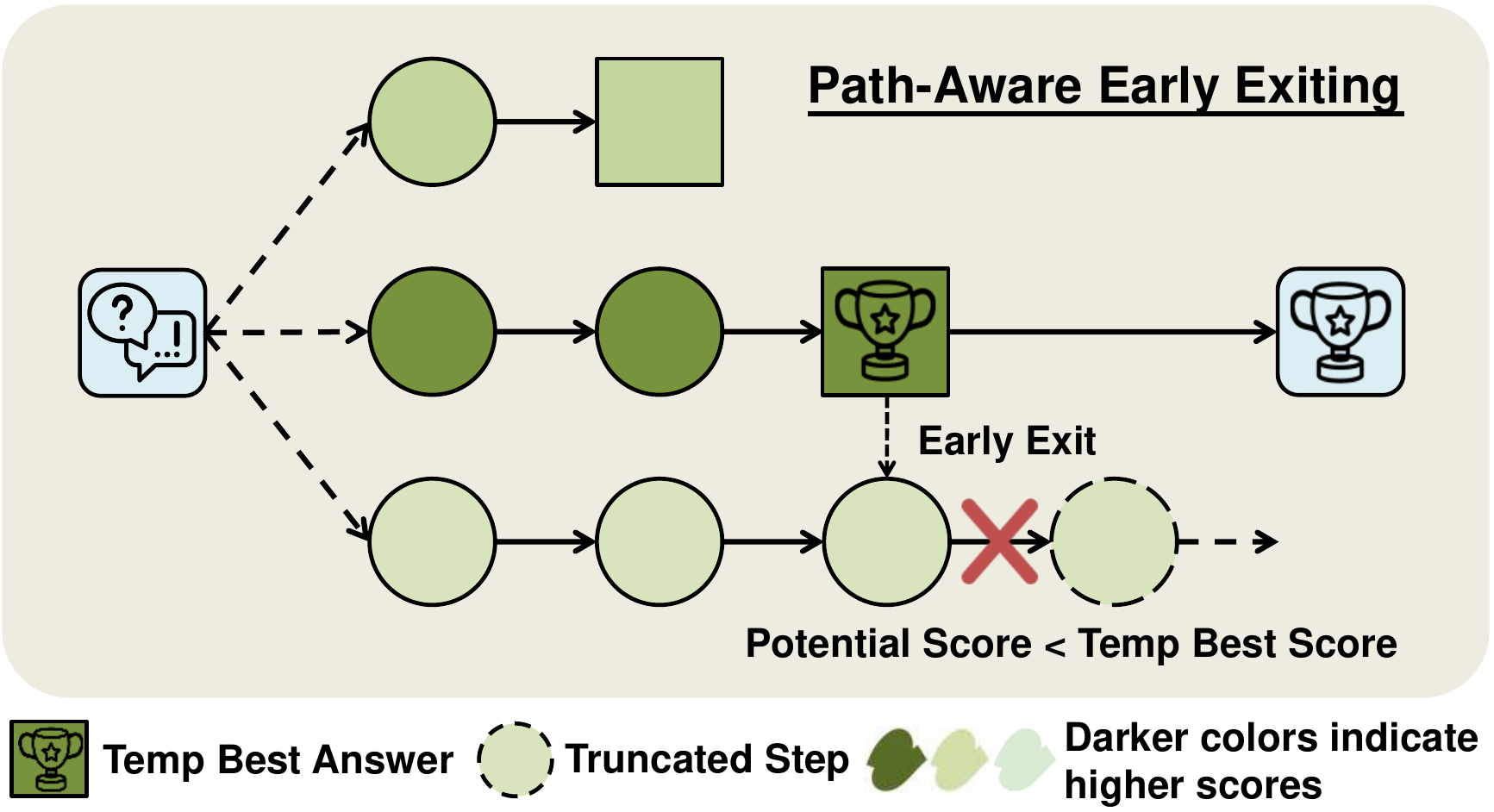}}
    \caption{Illustration of the path-aware early exiting mechanism.}
    \label{fig:early_exiting}
  \end{center}
  \vskip -0.2in
\end{figure}

\subsection{Executing Unified Inference Scaling}
\label{sec:executing_inference}

Upon determining the optimal UIS configuration $a_t = (M_t, QP_t, CP_t, BS_t)$, \algname executes a parameterized TTS procedure on the target model $M_t$. 
This procedure is abstracted as a search forest consisting of $QP_t$ subtrees, which evolves through a cyclical iteration of the following four steps:
(1) \textit{State Generation}: For each expansion at step $j$, the system generates $CP_t$ intermediate inference states $s_{i,j,k}$ for every subtree.
(2) \textit{Process Verification}: Once all intermediate states for the current step are generated, a verifier assigns a score $v(s_{i,j,k})$ to each state.
(3) \textit{Path Evaluation}: An inference path $p_{i,j}$ of depth $j$ is defined as an ordered sequence of states within a subtree. 
Its cumulative score $V(p_{i,j})$ is calculated as the arithmetic mean of the scores of all states within the path: $V(p_{i,j}) = \frac{1}{j} \sum_{h=1}^{j} v(s_{i,h,k_h})$.
(4) \textit{Search Control}: The system ranks candidate branches based on their path scores. 
To manage computational complexity, only the top $BS_t$ paths in each subtree are retained for subsequent expansion.
When a path meets the termination criteria (e.g., generating a complete answer or reaching the maximum depth), it is added to the set of completed paths $\mathcal{P}_{\mathrm{done}}$.
Finally, the system selects the result with the highest score as the final output: $
y^* = \arg\max_{p \in \mathcal{P}_{\mathrm{done}}} V(p)$.

\textbf{Inference Execution with Path-Aware Early Exiting.}
To overcome the straggler effect inherent in traditional TTS workflows, where the system must wait for the slowest path to finish, \algname introduces a path-aware early exiting mechanism (see \cref{fig:early_exiting}). 
The core of this mechanism lies in leveraging verifier scores as informative indicators of final answer correctness (detailed experiments in \cref{app:score_vs_accuracy}) to perform real-time potential assessment of ongoing inference paths.
During execution, the system dynamically maintains the current maximum score $V_{\max}$ and the depth of the corresponding best path $D_{\text{best}}$ among all completed paths: $V_{\max} = \max_{p \in \mathcal{P}_{\mathrm{done}}} V(p)$. 
To improve efficiency, the maximum search depth $H_\mathrm{max}$ is dynamically updated as: $H_\mathrm{max} = \lceil \eta \cdot D_{\text{best}} \rceil$, where $\eta \geq 1$ is a tunable expansion factor.
For any incomplete path $p_{i,j}$ with length $j$, the system determines its viability by calculating its theoretical maximum potential score: $\frac{j \cdot V(p_{i,j}) + (H_\mathrm{max} - j) \cdot v_{\mathrm{sup}}}{H_\mathrm{max}} < V_{\max}$, where $v_{\mathrm{sup}}=1$ represents the theoretical upper bound of the verifier’s score. 
If this condition is met, the system determines that even if the path performs perfectly in all subsequent steps, its final score cannot surpass the current best result. 
Consequently, the system terminates the expansion of that path immediately.
By transforming path scores into online search control signals, this rule dynamically eliminates low-potential branches without altering the final selection criteria. 
This ensures that the inference cost of all UIS configurations is significantly reduced while maintaining their intrinsic reasoning quality, thereby optimizing the practical quality--cost frontier.

\subsection{Evaluating the Selected Configuration $a_t$}
\label{sec:evaluating_performance}

In evaluating the selected UIS configuration $a_t$, \algname constructs a composite reward function that precisely reflects the inference utility through a multi-dimensional metric system. 
The core logic of this function lies in the fine-grained characterization of inference quality and costs.

\textbf{Quality Assessment via Dense Verification Feedback.} 
To address the feedback sparsity of relying solely on binary answer correctness $\mathrm{Correct}(a_t)$, \algname introduces the \textit{dense verification feedback} mechanism.
This mechanism reshapes the quality assessment into a multi-dimensional evaluation framework, jointly driven by the binary correctness $\mathrm{Correct}(a_t)$ of the final output $y^*$ and its intrinsic path score $V(y^*)$ (denoted as $\mathrm{Score}(a_t)$). 
Specifically, as a continuous variable, the verifier score provides a denser supervision signal than binary labels, enabling the agent to perceive nuances in logical rigor across different reasoning paths. 
By capturing the evolution of these scores, the system can acutely identify latent progress during the reasoning process, thereby effectively guiding the agent to select better UIS configurations and achieve a final breakthrough in correctness. 
A systematic experimental analysis regarding the alignment between verifier scores and answer correctness is provided in \cref{app:score_vs_accuracy}.

\textbf{Cost Measurement via UIS Cost Model.}
To accurately quantify the cost of UIS configurations, we construct a \textit{UIS cost model} $C_{\mathrm{UIS}}$, which incorporates the prefill cost $C_{\mathrm{prefill}}$, the incremental generation cost $C_{\mathrm{inc}}^{(j)}$ of each inference step, and the corresponding verification cost $C_{\mathrm{ver}}^{(j)}$ (see \cref{app:detailed_eFLOPs} for details).
\begin{equation}
C_{\mathrm{UIS}} = C_{\mathrm{prefill}}(L_\mathrm{in}) + \sum_{j=1}^{H} \left[ C_{\mathrm{inc}}^{(j)} + C_{\mathrm{ver}}^{(j)} \right].
\end{equation}
Each cost component is calculated using equivalent FLOPs (eFLOPs) \cite{sadhukhan2025kinetics}, which projects memory access pressure and computational load onto a unified dimension via hardware arithmetic intensity. 
This modeling enables $C_{\mathrm{UIS}}$ to transcend the boundaries between compute-bound and memory-bound operations, providing an accurate characterization of the true cost of UIS on heterogeneous hardware. 
For subsequent reward computation, the system log-transforms the output $C_{\mathrm{UIS}}(a_t)$ and applies min-max normalization to yield the standardized cost term $\tilde{C}_{\mathrm{UIS}}(a_t)$. 

\textbf{Composite Reward Function.} 
The final reward is a convex combination of quality and cost:
\begin{equation}
  \label{eq:reward}
r_{t} = w_{1} \cdot \mathrm{Correct}(a_{t}) + w_{2} \cdot \mathrm{Score}(a_{t}) + w_{3} \cdot (1- \tilde{C}_\mathrm{UIS}(a_t)).
\end{equation}
As users adjust reward weights $w_i$ to redefine priorities, \algname promptly captures these shifts and identifies the optimal UIS configuration consistently.

\section{Empirical Evaluation}
\label{sec:evaluation}

\begin{table}[!t]
\caption{Default configuration range of the UIS space $\mathcal{A}$, utilizing the Qwen3 series \cite{yang2025qwen3} as candidate models and Skywork-o1-Open-PRM-Qwen-2.5-1.5B \cite{he_2024_16998085} as the verifier.}
\small
\label{tab:uss}
\begin{center}
\begin{footnotesize}
\begin{tabular}{ll}
\toprule
Component & Configuration Details \\
\midrule
Models    & Qwen3 (0.6B, 1.7B, 4B, 8B, 14B, 32B) \\
\midrule
TTS       & $QP, CP \! \in \! \{2^0, \dots, 2^6\}$, s.t. $QP \! \times \! CP \! \le \! 2^6$ \\
\addlinespace[0.2ex]
          & $BS \! = \! \{1, 2, 4\}$ for $CP \! \le \! \{2^1, 2^3, 2^6\}$ \\
\midrule
Verifier  & Skywork-o1-Open-PRM-Qwen-2.5-1.5B \\
\bottomrule
\end{tabular}
\end{footnotesize}
\end{center}
\vskip -0.1in
\end{table}

\begin{table*}[!t]
\centering
\caption{Main performance comparison across \colorbox{blue!10}{TTS}, \colorbox{green!10}{Routing}, and \colorbox{orange!10}{UIS} scenarios. Results represent the mean and standard deviation across five random seeds (3, 23, 42, 50, 57), excluding the 50-step warm-up phase. Performance is evaluated under Cost-Sensitive and Quality-Priority reward modes, with \textbf{best} and \underline{second-best} results highlighted accordingly.}
\label{tab:main-performance}
\small
\setlength{\tabcolsep}{3pt}
\begin{tabular}{ll ccc ccc}
\toprule
\multirow{2}{*}{\textbf{Method}} & \multirow{2}{*}{\textbf{Metric}} & \multicolumn{3}{c}{\textbf{Cost-Sensitive}} & \multicolumn{3}{c}{\textbf{Quality-Priority}} \\
\cmidrule(lr){3-5} \cmidrule(lr){6-8}
& & \cellcolor{blue!10}\textbf{TTS} & \cellcolor{green!10}\textbf{Routing} & \cellcolor{orange!10}\textbf{UIS} & \cellcolor{blue!10}\textbf{TTS} & \cellcolor{green!10}\textbf{Routing} & \cellcolor{orange!10}\textbf{UIS} \\
\midrule

\multirow{3}{*}{Random} & Reward $(\uparrow)$ & \cellcolor{blue!5}0.6937 (0.0052) & \cellcolor{green!5}0.4733 (0.0076) & \cellcolor{orange!5}0.5731 (0.0143) & \cellcolor{blue!5}0.5726 (0.0056) & \cellcolor{green!5}0.5055 (0.0095) & \cellcolor{orange!5}\underline{0.6175 (0.0116)} \\
 & Accuracy $(\uparrow)$ & \cellcolor{blue!5}42.12 (1.22) & \cellcolor{green!5}43.50 (1.29) & \cellcolor{orange!5}53.00 (1.70) & \cellcolor{blue!5}42.25 (1.09) & \cellcolor{green!5}43.25 (1.33) & \cellcolor{orange!5}\underline{52.88 (1.79)} \\
 & Cost $(\downarrow)$ & \cellcolor{blue!5}76.7 (9.7) & \cellcolor{green!5}2302.9 (190.6) & \cellcolor{orange!5}1358.7 (220.1) & \cellcolor{blue!5}73.2 (7.0) & \cellcolor{green!5}2370.4 (278.9) & \cellcolor{orange!5}1359.8 (219.6) \\ \midrule

\multirow{3}{*}{Greedy} & Reward $(\uparrow)$ & \cellcolor{blue!5}\underline{0.7184 (0.0383)} & \cellcolor{green!5}0.5873 (0.0406) & \cellcolor{orange!5}0.5589 (0.0616) & \cellcolor{blue!5}\underline{0.6184 (0.0435)} & \cellcolor{green!5}\textbf{0.5459 (0.0152)} & \cellcolor{orange!5}0.5780 (0.0441) \\
 & Accuracy $(\uparrow)$ & \cellcolor{blue!5}43.75 (1.58) & \cellcolor{green!5}34.12 (6.63) & \cellcolor{orange!5}45.00 (4.66) & \cellcolor{blue!5}\underline{46.50 (2.46)} & \cellcolor{green!5}\textbf{50.00 (2.40)} & \cellcolor{orange!5}52.88 (2.26) \\
 & Cost $(\downarrow)$ & \cellcolor{blue!5}54.0 (30.7) & \cellcolor{green!5}202.5 (145.9) & \cellcolor{orange!5}660.6 (219.0) & \cellcolor{blue!5}67.6 (44.4) & \cellcolor{green!5}2643.4 (916.7) & \cellcolor{orange!5}3402.1 (1305.0) \\ \midrule

\multirow{3}{*}{MLP} & Reward $(\uparrow)$ & \cellcolor{blue!5}0.7006 (0.0486) & \cellcolor{green!5}\underline{0.6161 (0.0071)} & \cellcolor{orange!5}0.6301 (0.1032) & \cellcolor{blue!5}0.5752 (0.0609) & \cellcolor{green!5}0.4963 (0.0124) & \cellcolor{orange!5}0.6055 (0.0329) \\
 & Accuracy $(\uparrow)$ & \cellcolor{blue!5}41.75 (6.91) & \cellcolor{green!5}29.75 (2.08) & \cellcolor{orange!5}47.00 (5.02) & \cellcolor{blue!5}42.38 (5.18) & \cellcolor{green!5}42.00 (3.25) & \cellcolor{orange!5}47.50 (4.95) \\
 & Cost $(\downarrow)$ & \cellcolor{blue!5}\underline{48.6 (20.4)} & \cellcolor{green!5}\underline{154.7 (105.6)} & \cellcolor{orange!5}644.0 (769.8) & \cellcolor{blue!5}46.1 (19.9) & \cellcolor{green!5}1807.3 (1238.6) & \cellcolor{orange!5}353.4 (340.3) \\ \midrule

\multirow{3}{*}{k-NN} & Reward $(\uparrow)$ & \cellcolor{blue!5}0.6966 (0.0055) & \cellcolor{green!5}0.5819 (0.0047) & \cellcolor{orange!5}\underline{0.6590 (0.0108)} & \cellcolor{blue!5}0.5146 (0.0107) & \cellcolor{green!5}0.5273 (0.0149) & \cellcolor{orange!5}0.5807 (0.0180) \\
 & Accuracy $(\uparrow)$ & \cellcolor{blue!5}36.50 (1.51) & \cellcolor{green!5}34.62 (3.68) & \cellcolor{orange!5}41.38 (2.83) & \cellcolor{blue!5}36.25 (2.27) & \cellcolor{green!5}47.38 (1.99) & \cellcolor{orange!5}46.75 (2.72) \\
 & Cost $(\downarrow)$ & \cellcolor{blue!5}49.0 (3.4) & \cellcolor{green!5}522.9 (65.6) & \cellcolor{orange!5}\underline{326.0 (94.4)} & \cellcolor{blue!5}47.8 (4.5) & \cellcolor{green!5}2046.8 (327.9) & \cellcolor{orange!5}1113.4 (253.0) \\ \midrule

 \multirow{3}{*}{NeuralUCB} & Reward $(\uparrow)$ & \cellcolor{blue!5}0.6984 (0.0185) & \cellcolor{green!5}0.4849 (0.0194) & \cellcolor{orange!5}0.5880 (0.0291) & \cellcolor{blue!5}0.5580 (0.0087) & \cellcolor{green!5}0.4991 (0.0199) & \cellcolor{orange!5}0.5929 (0.0132) \\
 & Accuracy $(\uparrow)$ & \cellcolor{blue!5}42.00 (2.14) & \cellcolor{green!5}42.62 (2.66) & \cellcolor{orange!5}50.75 (1.50) & \cellcolor{blue!5}41.12 (0.25) & \cellcolor{green!5}42.75 (3.66) & \cellcolor{orange!5}50.25 (2.61) \\
 & Cost $(\downarrow)$ & \cellcolor{blue!5}70.6 (13.8) & \cellcolor{green!5}2216.6 (726.5) & \cellcolor{orange!5}1558.2 (1120.7) & \cellcolor{blue!5}68.4 (6.1) & \cellcolor{green!5}2679.1 (594.7) & \cellcolor{orange!5}1819.7 (838.1) \\ \midrule
 
\multirow{3}{*}{TS} & Reward $(\uparrow)$ & \cellcolor{blue!5}0.7022 (0.0075) & \cellcolor{green!5}0.4541 (0.0086) & \cellcolor{orange!5}0.5549 (0.0102) & \cellcolor{blue!5}0.5901 (0.0089) & \cellcolor{green!5}0.5292 (0.0124) & \cellcolor{orange!5}0.6243 (0.0050) \\
 & Accuracy $(\uparrow)$ & \cellcolor{blue!5}44.50 (1.55) & \cellcolor{green!5}48.37 (1.16) & \cellcolor{orange!5}52.12 (2.11) & \cellcolor{blue!5}44.88 (1.55) & \cellcolor{green!5}47.00 (2.00) & \cellcolor{orange!5}54.00 (1.09) \\
 & Cost $(\downarrow)$ & \cellcolor{blue!5}67.9 (11.3) & \cellcolor{green!5}2992.7 (395.3) & \cellcolor{orange!5}1762.8 (389.2) & \cellcolor{blue!5}63.9 (8.4) & \cellcolor{green!5}3065.1 (495.0) & \cellcolor{orange!5}1875.7 (396.7) \\ \midrule

\multirow{3}{*}{\makecell{\algname\\(ours)}} & Reward $(\uparrow)$ & \cellcolor{blue!5}\textbf{0.7535 (0.0071)} & \cellcolor{green!5}\textbf{0.6196 (0.0039)} & \cellcolor{orange!5}\textbf{0.7079 (0.0110)} & \cellcolor{blue!5}\textbf{0.6450 (0.0265)} & \cellcolor{green!5}\underline{0.5347 (0.0183)} & \cellcolor{orange!5}\textbf{0.6306 (0.0212)} \\
 & Accuracy $(\uparrow)$ & \cellcolor{blue!5}46.50 (1.02) & \cellcolor{green!5}29.00 (1.61) & \cellcolor{orange!5}46.88 (1.72) & \cellcolor{blue!5}\textbf{48.12 (1.05)} & \cellcolor{green!5}\underline{47.87 (2.64)} & \cellcolor{orange!5}\textbf{57.37 (1.94)} \\
 & Cost $(\downarrow)$ & \cellcolor{blue!5}\textbf{23.3 (2.7)} & \cellcolor{green!5}\textbf{82.9 (4.5)} & \cellcolor{orange!5}\textbf{49.4 (11.6)} & \cellcolor{blue!5}26.6 (9.6) & \cellcolor{green!5}1924.6 (969.1) & \cellcolor{orange!5}1374.7 (437.5) \\ \midrule

\multirow{3}{*}{\color{gray}Oracle} & \color{gray}Reward $(\uparrow)$ & \cellcolor{blue!5}\color{gray}0.8297 (0.0031) & \cellcolor{green!5}\color{gray}0.6226 (0.0044) & \cellcolor{orange!5}\color{gray}0.8337 (0.0030) & \cellcolor{blue!5}\color{gray}0.7426 (0.0047) & \cellcolor{green!5}\color{gray}0.6121 (0.0093) & \cellcolor{orange!5}\color{gray}0.7924 (0.0048) \\
 & \color{gray}Accuracy $(\uparrow)$ & \cellcolor{blue!5}\color{gray}54.87 (0.92) & \cellcolor{green!5}\color{gray}37.50 (1.85) & \cellcolor{orange!5}\color{gray}57.38 (0.92) & \cellcolor{blue!5}\color{gray}59.12 (0.75) & \cellcolor{green!5}\color{gray}56.12 (2.07) & \cellcolor{orange!5}\color{gray}68.12 (0.79) \\
 & \color{gray}Cost $(\downarrow)$ & \cellcolor{blue!5}\color{gray}10.5 (0.3) & \cellcolor{green!5}\color{gray}106.4 (6.1) & \cellcolor{orange!5}\color{gray}10.6 (0.3) & \cellcolor{blue!5}\color{gray}20.3 (0.7) & \cellcolor{green!5}\color{gray}1911.1 (86.4) & \cellcolor{orange!5}\color{gray}115.7 (3.0) \\
\bottomrule
\end{tabular}
\vskip -0.1in
\end{table*}

We evaluate \algname's online decision-making performance within a joint UIS space comprising multiple candidate models and diverse TTS strategies (see \cref{tab:uss}).
The evaluation encompasses three deployment scenarios derived from practical requirements: optimizing TTS strategies for a specific model (\cref{sec:fixed_model}), routing queries across distinct candidate models (\cref{sec:model_routing}), and performing full joint optimization within the complete UIS space (\cref{sec:uis_experiment}).

Our experiments are conducted on a total of 210 instances curated from the AIME'24 \cite{aime24}, AIME'25 \cite{aime25}, and MATH-500 \cite{aggarwal2023lets} datasets. 
Specifically, for MATH-500, we randomly sample 30 instances from each difficulty level (Levels 1–5). 
These 210 instances directly correspond to the experimental workflow, which consists of a 50-step warm-up followed by 160 policy-driven iterations.
We compare \algname against two categories of baselines: 
(1) Multi-armed Bandit baselines, including \textit{Random} exploration, a \textit{Greedy} strategy based on \algname's reward estimator, \textit{Thompson Sampling (TS)} \cite{chapelle2011empirical} that maintains posterior reward distributions for exploration, \textit{NeuralUCB} \cite{pmlr-v119-zhou20a} that utilizes neural networks for non-linear function approximation with optimistic exploration, and an \textit{Oracle} that represents the theoretical performance upper bound. 
(2) Predictive Routing baselines inspired by RouterBench \cite{hu2024routerbench}, consisting of online \textit{MLP} and \textit{k-NN} routers that estimate performance based on historical observations.
We additionally compare against BEST-Route \cite{ding2025best} under compatible settings in \cref{app:additional_comparisons_with_best_route}.
To evaluate the algorithm's versatility, we conduct primary experiments under two practical reward modes, \textit{Quality-Priority} and \textit{Cost-Sensitive}. 
We also employ \textit{Cost-Leaning} and \textit{Quality-Leaning} variants solely for the Pareto analysis in \cref{sec:uis_experiment}.

We evaluate \algname using two categories of metrics. For \textit{static performance}, we report the mean Reward (see \cref{eq:reward}), Accuracy (in percentage points, pp), and Cost (in Tera-eFLOPs, see \cref{app:detailed_eFLOPs}) over the 160 policy-driven iterations. 
For \textit{dynamic learning efficiency}, we track the cumulative Regret (see \cref{eq:regret}), Correct counts, and Cost (including the 50-step warm-up phase). 
The cumulative regret is further analyzed by defining Reg.@130 and Reg.@210 as the regret measured up to the 130th and 210th steps, respectively.

Detailed experimental settings and results are provided in \cref{app:environment,app:action_attributes,app:hyperparameters,app:detail_result}.
Additionally, we demonstrate \algname's inherent task-agnostic generalizability through experiments across diverse tasks (\cref{app:generalization_tasks}).

\subsection{Scenario I: Adaptive Test-Time Scaling Selection}
\label{sec:fixed_model}

We evaluate the fine-grained quality--cost trade-offs by dynamically selecting configurations within the TTS subspace under a fixed Qwen3-0.6B capacity.
\cref{tab:main-performance} shows that while the TTS subspace offers sufficient flexibility to achieve high rewards in Cost-Sensitive mode (matching global UIS performance), a significant gap persists in Quality-Priority mode.
This confirms that \textit{the TTS subspace is ultimately limited by the base model capacity}, preventing the realization of wide-range quality--cost trade-offs. 
Notably, \algname significantly outperforms all baseline methods across both reward modes, consistently securing the highest reward by identifying optimal configurations that yield the lowest inference cost in the Cost-Sensitive mode and the highest accuracy in the Quality-Priority mode.

\subsection{Scenario II: Adaptive Model Routing}
\label{sec:model_routing}

We evaluate the algorithm's capability for dynamic query routing across various model scales within the routing subspace under a fixed inference structure (e.g., CoT). 
The lower rewards compared to the global UIS space in \cref{tab:main-performance} highlight the inherent limitations of the routing subspace, as it lacks the logical compensation provided by TTS and \textit{suffers from performance jumps due to the sparsity of available model configurations}. 
While \algname remains robust, its performance is comparable to the Greedy baseline. 
This reflects the low-dimensional nature of this routing subspace, where the exploration factor in \algname incurs a slight penalty once the optimal model for the current distribution is rapidly identified during the warm-up phase.
Crucially, as the Greedy strategy represents a special case of \algname without exploration and relies on the same linear reward estimator, its competitive performance directly \textit{validates the estimator's effectiveness} in accurately characterizing model capability boundaries and identifying query complexity.

\begin{figure}[!t]
  \vskip 0.2in
  \begin{center}
    \centerline{\includegraphics[width=0.95\columnwidth]{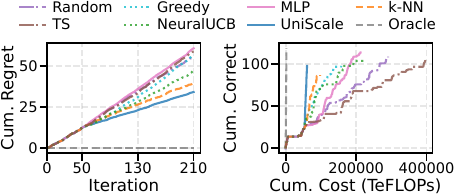}}
    \caption{Cumulative regret and efficiency curves for \algname and baselines in the Adaptive UIS scenario under Cost-Sensitive mode (including the 50-step warm-up phase).}
    \label{fig:main-example}
  \end{center}
  \vskip -0.2in
\end{figure}

\begin{figure}[!t]
  \vskip 0.2in
  \begin{center}
    \centerline{\includegraphics[width=0.95\columnwidth]{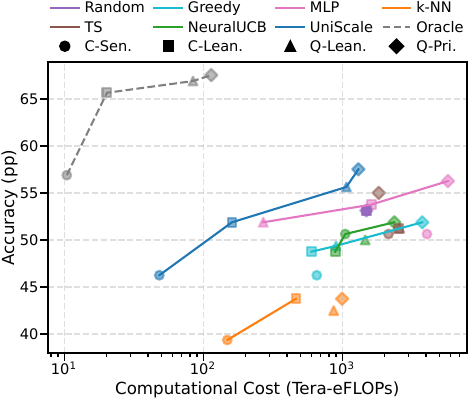}}
    \caption{Accuracy-cost trade-offs of \algname and baselines in the Adaptive UIS scenario. Distinct marker shapes represent four distinct reward modes: Cost-Sensitive (C-Sen.), Cost-Leaning (C-Lean.), Quality-Leaning (Q-Lean.), and Quality-Priority (Q-Pri.).}
    \label{fig:uis_tradeoff}
  \end{center}
  \vskip -0.2in
\end{figure}

\subsection{Scenario III: Unified Inference Scaling}
\label{sec:uis_experiment}

We evaluate the effectiveness of the UIS paradigm through the joint optimization of model and TTS selection within the global UIS space. 
\cref{tab:main-performance} confirms that the UIS space yields the highest global rewards, validating the deep synergy between the two dimensions. 
Specifically, diverse TTS configurations smooth the discrete performance jumps of the routing subspace, while model switching breaks the capacity ceiling of single-model TTS. 
In this complex joint space, \algname excels across both reward modes, consistently securing the highest rewards (and thus the minimum cumulative regret) by identifying configurations that yield the minimum cost in the Cost-Sensitive mode (see \cref{fig:main-example}) and the maximum accuracy in the Quality-Priority mode. 

This trend is further illustrated in \cref{fig:uis_tradeoff}, which presents the accuracy-cost frontier under different reward modes.
Overall, \algname consistently achieves a superior Pareto trade-off compared with all baselines, demonstrating its ability to adaptively exploit the joint UIS space.
Compared with existing baselines, it exhibits a steeper improvement trend, indicating more efficient configuration selection across diverse reward modes.
Moreover, all variants of \algname remain on the Pareto frontier across different reward weights, demonstrating strong robustness to coefficient variations while avoiding undesirable trade-offs between inference cost and generation quality.
These results demonstrate that online joint optimization fully \textit{unleashes the UIS paradigm's potential for wide-range, fine-grained trade-offs}.
\section{Ablation Study}
\label{sec:ablation_study}

\begin{table}[!t]
\centering
\caption{Performance comparison between \algname and a non-semantic baseline (\textit{w/o Sem.}) across reward modes.}
\label{tab:action-semantic}
\small
\setlength{\tabcolsep}{3pt}
\begin{tabular}{llcccc}
\toprule
\textbf{Mode} & \textbf{Config.} & \textbf{Reg.@130} & \textbf{Reg.@210} & \textbf{Acc.} & \textbf{Cost} \\ 
\midrule
Q-Pri. & \algname & \textbf{20.38} & \textbf{33.98} & \textbf{57.50} & 1303.3 \\
       & \textit{w/o Sem.} & 23.18 & 38.24 & 50.62 & \textbf{906.4} \\
\midrule
C-Sen. & \algname & \textbf{23.43} & \textbf{34.43} & 46.25 & \textbf{48.2} \\
       & \textit{w/o Sem.} & 26.74 & 41.56 & \textbf{46.88} & 629.7 \\
\bottomrule
\end{tabular}
\vskip -0.1in
\end{table}

\begin{figure}[!t]
  \vskip 0.2in
  \begin{center}
    \centerline{\includegraphics[width=0.95\columnwidth]{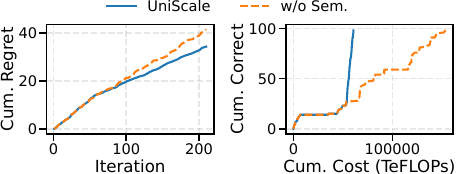}}
    \caption{Cumulative regret and efficiency curves for \algname and a non-semantic baseline under Cost-Sensitive mode.}
    \label{fig:semantic_example}
  \end{center}
  \vskip -0.2in
\end{figure}

\textbf{Effectiveness of Action Semantic Representations.}
We assess the impact of action semantic representations (see \cref{sec:updating_the_reward_estimator}) on learning efficiency and decision quality by comparing \algname against a baseline that represents actions as independent one-hot vectors (\textit{w/o Sem.}). 
\cref{tab:action-semantic} shows that incorporating semantic features consistently yields lower cumulative regret across both reward modes. 
By evaluating results across different objectives, we find that \algname\ facilitates a significantly broader range of quality--cost trade-offs: the accuracy difference between Quality-Priority and Cost-Sensitive modes reaches 11.25pp, with the latter's cost being only 3.7\% of the former. In contrast, the baseline achieves a narrower accuracy gap of 4.37pp, with its Cost-Sensitive cost remaining at 69.5\% of its Quality-Priority cost.
These results demonstrate that action semantic representations effectively enhance \algname's understanding of both inference quality and cost across diverse UIS configurations. 
By enabling cross-action knowledge transfer, this approach significantly accelerates policy optimization within high-dimensional search spaces. 
In addition to the Cost-Sensitive mode shown in \cref{fig:main-example}, we provide the full set of results in \cref{app:details_result_action_semantic}.

\begin{table}[!t]
\centering
\small
\caption{The impact of applying path-aware early exiting on accuracy, computational load (TFLOPs), memory access volume (TB), and inference cost.}
\label{tab:early-exit-compact}
\begin{tabular}{lcccc}
\toprule
\textbf{Configuration} & \textbf{Acc.} & \textbf{Comp.} & \textbf{Mem.} & \textbf{Cost} \\ \midrule
Standard TTS & 65.43 & 667.4 & 26.8 & 4851.4 \\
\textit{w/ Early Exit} & 64.52 & 85.0 & 5.9 & 1002.9 \\ \midrule
Change & -0.91pp & -87.26\% & -78.06\% & -79.33\% \\ \bottomrule
\end{tabular}
\vskip -0.1in
\end{table}

\textbf{Efficiency Gains from Path-Aware Early Exiting.}
We evaluate the impact of the path-aware early exiting mechanism (\cref{sec:executing_inference}) by comparing the average performance across all UIS configurations. 
To balance exploration depth and pruning aggressiveness, we set the expansion factor $\eta = 1.2$. 
\cref{tab:early-exit-compact} shows that this strategy significantly optimizes inference efficiency by performing real-time potential assessment of ongoing inference paths. 
By dynamically eliminating redundant steps, the mechanism reduces computational load and memory access volume, which lowers the total inference cost with negligible impact on final accuracy. 
These findings confirm the efficacy of the mechanism in identifying logical convergence points and achieving high efficiency by pruning ineffective inference paths.

\begin{table}[!t]
\centering
\caption{Effectiveness of dense verification feedback compared to binary correctness (\textit{w/o Dense Feedback}) across reward modes.}
\label{tab:verification-feedback}
\small
\begin{tabular}{llccc}
\toprule
\textbf{Mode} & \textbf{Configuration} & \textbf{Accuracy $(\uparrow)$} & \textbf{Cost $(\downarrow)$} \\ 
\midrule
Q-Pri. & \algname & \textbf{57.50} & \textbf{1303.3} \\
    & \textit{w/o Dense Feedback} & 51.88 & 2674.9 \\
\midrule
C-Sen. & \algname & \textbf{46.25} & 48.2 \\
    & \textit{w/o Dense Feedback} & 43.75 & \textbf{47.4} \\
\bottomrule
\end{tabular}
\vskip -0.1in
\end{table}

\textbf{Effectiveness of Dense Verification Feedback.} 
We examine the role of dense verification feedback (see \cref{sec:evaluating_performance}) in guiding policy convergence compared to a baseline relying on sparse binary correctness (\textit{w/o Dense Feedback}). 
\cref{tab:verification-feedback} demonstrates that verification feedback provides a denser supervision source, allowing the system to perceive subtle nuances in logical rigor. 
Specifically, \algname achieves a 5.62pp accuracy improvement and a 51.3\% reduction in inference cost in Quality-Priority mode, while achieving a 2.5pp accuracy gain with nearly identical computational expenditure in Cost-Sensitive mode. 
This validates that dense feedback effectively anticipates breakthroughs in final correctness, mitigating the reward sparsity inherent in traditional binary signals.

\textbf{Robustness to Non-stationary Environmental Drifts.}
We evaluate the online adaptability of \algname by simulating four non-stationary environments where environmental drifts are introduced at the 51st iteration, immediately following the warm-up phase.
These encompass action space dynamics, involving the addition and removal (Add./Rem.) of 0.6B and 1.7B model nodes under the Cost-Sensitive mode, as well as bidirectional reward mode shifts (Q $\leftrightarrow$ C). 
Compared to the k-NN predictive router (the strongest baseline in the global UIS space), \algname consistently maintains significantly lower stage-wise and final cumulative regret (see \cref{tab:robustness-final}). 
Specifically, in the model removal environment (\cref{fig:robustness_rem}), although the system loses its previously preferred low-cost nodes, \algname rapidly explores and identifies new optimal configurations, achieving a 75.1\% reduction in inference cost compared to the baseline while maintaining superior accuracy (+2.50pp).
In the model additional and reward shift environments, \algname exhibits exceptional agility by automatically triggering re-exploration mechanisms to fit new distributions. 
Further details on the dynamic evolution of \algname are available in \cref{app:details_result_environmental_drift}, which provides a granular analysis of the policy recalibration process and recovery slopes following environmental drifts.
This robustness confirms that the combination of semantic mapping and principled exploration allows the framework to consistently maintain an optimal quality--cost trade-off even under severe environmental perturbations.

For further comprehensive evaluations, please refer to \cref{app:sensitivity_to_the_underlying_verifier} for sensitivity to the verifier and \cref{app:system_overhead_and_latency_breakdown} for the physical latency breakdown.

\begin{table}[!t]
\centering
\caption{Robustness comparison between \algname and k-NN under non-stationary drifts involving action space dynamics (Add./Rem.) and reward mode shifts (Q $\leftrightarrow$ C).}
\label{tab:robustness-final}
\setlength{\tabcolsep}{3pt}
\begin{small}
\begin{tabular}{llcccc}
\toprule
\textbf{Env.} & \textbf{Method} & \textbf{Reg.@130} & \textbf{Reg.@210} & \textbf{Acc.} & \textbf{Cost} \\ 
\midrule
\multirow{2}{*}{Add.} & 
\algname & \textbf{27.43} & \textbf{38.52} & 43.75 & \textbf{59.6} \\
& k-NN & 34.42 & 51.73 & \textbf{45.62} & 805.1 \\
\midrule
\multirow{2}{*}{Rem.} 
& \algname & \textbf{27.56} & \textbf{41.15} & \textbf{53.12} & \textbf{255.9} \\
& k-NN & 35.52 & 57.49 & 50.62 & 1027.9 \\
\midrule
\multirow{2}{*}{Q $\rightarrow$ C} 
& \algname & \textbf{27.59} & \textbf{38.86} & \textbf{51.25} & \textbf{347.7} \\
& k-NN & 27.98 & 46.91 & 43.13 & 874.3 \\
\midrule
\multirow{2}{*}{C $\rightarrow$ Q} 
& \algname & \textbf{24.71} & \textbf{36.77} & \textbf{50.62} & \textbf{220.8} \\
& k-NN & 30.15 & 48.11 & 43.75 & 452.4 \\
\bottomrule
\end{tabular}
\end{small}
\vskip -0.1in
\end{table}

\begin{figure}[!t]
  \vskip 0.2in
  \begin{center}
    \centerline{\includegraphics[width=0.95\columnwidth]{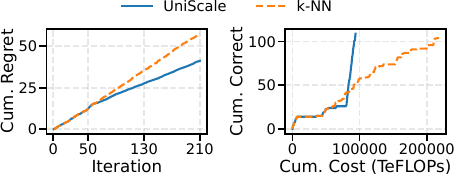}}
    \caption{Cumulative regret and efficiency curves for \algname and k-NN under a model removal environment. The 0.6B and 1.7B models are removed from the candidate set at the 51st iteration.}
    \label{fig:robustness_rem}
  \end{center}
  \vskip -0.2in
\end{figure}

\section{Related Work and Discussion}
\label{sec:discussion}

\textbf{LLM Routing.}
Early methods such as FrugalGPT \cite{chen2023frugalgpt} and AutoMix \cite{aggarwal2023automix} relied on cost-ordered cascading, sequentially querying models until a quality threshold was met.
More recent work shifted toward data-driven routing, training lightweight predictors to assign queries to appropriate models, as exemplified by HybridLLM \cite{ding2024hybrid} and RouteLLM \cite{ong2025routellm}.
Subsequent extensions improved the routing performance via contrastive learning or structured representations, e.g., RouterDC \cite{chen2024routerdc} and GraphRouter \cite{feng2025graphrouter}.
While BEST-Route \cite{ding2025best} takes an initial step toward incorporating test-time scaling by using Best-of-N sampling, existing routers largely remain model-centric and rely on offline training.

\textbf{Test-Time Scaling.}
TTS is generally categorized into two primary implementation pathways: serial scaling and parallel scaling. Serial scaling is characterized by the extension of reasoning chains \cite{wei2022chain}, a strategy effectively employed by recent state-of-the-art models such as DeepSeek-R1 \cite{guo2025deepseek}, OpenAI-o1 \cite{jaech2024openai}, and QwQ \cite{qwen2025qwq} to bolster complex reasoning capabilities. 
In contrast, parallel scaling encompasses techniques like repeated sampling and self-consistency \cite{wang2022self,brown2024large}, as well as reward-guided methods including Best-of-N, weighted voting, and tree search \cite{wan2024alphazerolike,snell2025scaling,wu2025inference}. 
While these approaches showcase the paradigm-shifting potential of TTS, current research remains largely confined to exploring performance ceilings of fixed models under static budgets \cite{snell2025scaling,liu2025can}. 
The challenge of dynamically selecting optimal TTS strategies tailored to varying queries remains an open problem.
\algname is orthogonal to these techniques and can be combined with them.
A detailed discussion is deferred to \cref{app:efficient_tts}.

\section{Conclusion}
\label{sec:conclusion}

We propose the \textit{Unified Inference Scaling (UIS)} paradigm, which integrates model routing and Test-Time Scaling (TTS) into a unified decision space. 
To solve this complex online optimization problem, we design \algname, \textit{an adaptive framework based on the contextual multi-armed bandit}. 
By introducing efficiency-aware learning and cost modeling mechanisms, \algname exploits the synergies between model routing and TTS to optimize the quality--cost frontier. 
Extensive experimental results demonstrate that \algname \textit{achieves fine-grained trade-offs across a broad spectrum in dynamic environments}. 
While the performance is currently modulated by the precision of Process Reward Models (PRMs), future work will explore more generalizable verification mechanisms to further enhance the framework's effectiveness across a broader range of tasks.

\section*{Acknowledgements}

We thank the anonymous reviewers for their insightful comments that helped improve this work. 
We also thank Zhaoyu Fan, Bowen Han, Zehua He, and Runze Lu for their helpful comments. 
This work was supported in part by 
the National Key Research and Development Program of China (Grant No.~2022YFA1003900), 
the Joint Funds of the National Natural Science Foundation of China (Grant No.~U25A20394), 
the Science and Technology Commission of Shanghai Municipality (Grant Nos.~24DP1500704 and 24YL1901100), 
the "Medical+X" Interdisciplinary Research Project of Tongji University (Grant No.~2025-0674-YB-02), 
the Fundamental Research Funds for the Central Universities (Grant No.~22120230311), 
CUHK-Shenzhen Research Grant (Grant No.~UDF01003466), 
the Guangdong Provincial Key Laboratory of Mathematical Foundations for Artificial Intelligence (Grant No.~2023B1212010001), 
the National Natural Science Foundation of China (Grant No.~62506319), 
the Guangdong Basic and Applied Basic Research Foundation (Grant No.~2026A1515030032), 
the Shenzhen Science and Technology Program (Grant No.~JCYJ20250604141031003) 
, and the Pearl River Talent Program of Guangdong Province (Grant No.~2024QN11X069).

\section*{Impact Statement}

The profound significance of this work lies in proposing the \textit{Unified Inference Scaling (UIS)} paradigm to break down technical silos between model routing and TTS, providing a unified theoretical foundation and a \textit{unified decision space} for large-scale AI inference orchestration. 
\algname, as a concrete implementation mechanism, shifts inference optimization from heuristic-based rules to data-driven automated policies, laying the groundwork for building intelligent and standardized AI infrastructure.

\textbf{Environmental Sustainability (Green AI).} 
\algname directly addresses the energy consumption challenges in the deployment of LLMs. 
By leveraging the \textit{UIS Cost Model} to provide a unified equivalent FLOPs metric, the framework enables hardware-aware optimization across diverse execution environments. 
Furthermore, the \textit{Path-Aware Early Exiting} mechanism minimizes redundant computation by dynamically terminating low-potential inference branches. 
This ensures that computational resources are concentrated on high-marginal-gain configurations, significantly reducing the carbon footprint of global inference services by preventing over-computation on suboptimal base models.

\textbf{Technological Democratization and Inclusivity.} 
This framework fosters a more inclusive AI ecosystem by providing a universal orchestration mechanism applicable to heterogeneous environments. 
By offering \textit{fine-grained control} over the quality--cost trade-off, \algname demonstrates significant potential in edge-cloud collaboration scenarios. 
Its \textit{online adaptivity} allows resource-constrained edge hardware to intelligently determine when to leverage local capabilities and when to pursue cloud upgrades. 
This bridges the digital divide, enabling personal devices to perform complex reasoning tasks previously reserved for high-end clusters, thus allowing a broader user base to access advanced AI intelligence with lower barriers to entry.

\textbf{Data Privacy and Decentralized Governance.} 
While this research focuses on online learning, the standardized decision space abstracted by \algname is inherently compatible with the federated orchestration paradigm \cite{dai2023federated,huang2025federated}. 
Since the Transformer-based encoder only processes abstract semantic features to determine configuration parameters without requiring access to raw corpora, it creates the possibility for building privacy-preserving cross-organizational inference networks. 
Sensitive query contexts and inference paths remain on local nodes, allowing global policy evolution to occur solely through the exchange of anonymized parameter updates.

\bibliography{reference}
\bibliographystyle{icml2026}

\newpage
\appendix
\onecolumn
\section{Additional Related Work}
\label{app:efficient_tts}

\textbf{Efficient Test-Time Scaling Strategies.}
In the realm of efficient TTS strategies, researchers focus on constructing optimized reasoning spaces to enhance search efficiency. 
Monte Carlo Tree Search (MCTS) models the reasoning process as a heuristic search within a state space, leveraging reward models to guide the model beyond the limitations of traditional greedy decoding \cite{hao2023reasoning}. 
AB-MCTS \cite{misaki2025wider} further introduces an adaptive mechanism that dynamically adjusts the search width and depth based on node uncertainty, achieving superior efficiency compared to standard MCTS. 
Forest of Thoughts (FoT) \cite{bi2025forestofthought} designs a multi-tree reasoning framework based on sparse activation. 
By maintaining reasoning trees in parallel and expanding only the most relevant paths, it significantly reduces computational overhead while maintaining search breadth. 
From a theoretical perspective, Rebase \cite{wu2025inference} derives an optimal allocation strategy for TTS and proposes a reward-balanced tree search algorithm, proving that it outperforms traditional majority voting under a fixed FLOPs budget.
However, existing works often remain confined to local optimizations of specific algorithms or fixed configurations, lacking a universal optimization of TTS. 
Bridging this gap, we adopt a unified perspective of the TTS inference process and propose UIS. 
This framework integrates model routing and TTS, facilitating a paradigm shift from algorithm-specific tuning to global parameterized search.

\textbf{Adaptive Resource Allocation in Test-Time Scaling.} 
Research in adaptive resource allocation aims to balance reasoning latency and computational cost through dynamic mechanisms. 
Adaptive-Consistency \cite{aggarwal2023lets} introduces a dynamic stopping mechanism based on statistical confidence (e.g., Beta distribution approximation), allowing the model to terminate sampling early once a consensus is reached, thereby drastically reducing redundant computation. 
IBPO \cite{yu2025think} models the reasoning process as a utility maximization problem under budget constraints, enabling the model to perceive task difficulty and adaptively allocate reasoning length. 
This approach achieves significantly higher efficiency in solving complex mathematical problems compared to standard self-consistency methods.
To further accelerate the generation process for complex tasks, SpecReason \cite{pan2025specreason}  applies speculative decoding at the reasoning-step level, utilizing a small model to quickly generate Chain-of-Thought (CoT) drafts which are then verified in parallel by a large model. 
R2R \cite{fu2025r2r} proposes a neural token routing method that invokes the LLM only on identified divergent tokens along the critical path. 
Through dynamic collaboration with a Small Language Model (SLM), it achieves reasoning performance and speed that surpasses medium-sized models and approaches large-scale models with minimal active parameters.
Despite these advances, most current mechanisms rely on offline training or static heuristic rules, making it difficult to respond in real-time to the environmental drift. 
\algname addresses this via an online contextual bandit framework, achieving real-time joint optimization of model routing and TTS configurations. 

\section{Additional Main Experimental Details and Results}
\label{app:experimental_details}

\subsection{Infrastructure and Computational Environment}
\label{app:environment}

\textbf{Execution Framework.} 
All TTS strategies in this study are implemented within OpenR \cite{wang2024tutorial,wang2024openr,liu2025can}, an open-source framework specifically engineered for LLM reasoning. 
To facilitate efficient inference tasks, we optimized the model deployment engine by integrating the vLLM (v0.11.2) inference backend \cite{10.1145/3600006.3613165}. 
This engine implements highly efficient prefix caching and prefix sharing mechanisms and supports dynamic batching, significantly enhancing the computational efficiency of TTS strategies. 
All model weights and KV caches are loaded in BFloat16 (BF16) format to strike an optimal balance between numerical stability and memory efficiency.

\textbf{Hardware Platform.}
Experimental evaluations were conducted on an NVIDIA A800 80GB SXM GPU cluster. 
To ensure hardware-referenced cost assessments, we utilize the eFLOPs (equivalent Floating Point Operations) cost model (\cref{app:detailed_eFLOPs}). 
We calibrated the base computational units according to the ratio of peak FP16/BF16 throughput to memory bandwidth of the NVIDIA A800 80GB SXM GPU (i.e., arithmetic intensity $I = 156$), ensuring that eFLOPs accurately reflect the hardware-level resource consumption across different model scales and TTS strategies.

\subsection{Semantic Representation of the Unified Inference Scaling Space}
\label{app:action_attributes}
To enable effective reasoning across heterogeneous model capabilities and TTS strategies, we propose a semanticization pipeline that maps UIS space into a continuous manifold. 
Each UIS configuration $a = (M, QP, CP, BS)$ is first transformed into a structured textual description that captures the functional essence of both the backbone model $M$ and the TTS strategy $(QP, CP, BS)$. This description is then projected into a vector space.

\textbf{Model Specification.}
For the model component $M$, we construct a capability-oriented description that includes both architectural scale and empirical performance anchors, utilizing a key–value format that contains parameter scale (\texttt{Params}) and benchmark scores grouped by task categories such as expert reasoning \cite{rein2024gpqa,white2025livebench}, mathematics \cite{aime24,aime25}, logic \cite{lin2025zebralogic}, and coding \cite{ICLR2025_94074dd5}. 
To improve semantic alignment in the embedding space, each benchmark score is annotated with an explicit task-domain prefix, such as \texttt{ExpertReasoning\_GPQA} or \texttt{Math\_AIME24}. 
These task-level anchors serve as semantic references that allow the embedding model to associate a backbone LLM with its relative strengths across different reasoning dimensions, rather than treating the model identifier as an opaque symbol.

\begin{quote}
\textbf{Example (model description):} \\
\texttt{Model: qwen3-14b | Params: 14B | ExpertReasoning\_GPQA: 54.80 | GeneralMixed\_LiveBench: 59.60 | Math\_AIME24: 31.70 | Logic\_Zebra: 33.00 | Coding\_LCB: 29.00}
\end{quote}

\textbf{TTS Configuration.}
The TTS component $(QP, CP, BS)$ is translated into a structured description that emphasizes its functional semantics rather than its raw numerical values. 
We reinterpret $QP$ as the number of \texttt{Parallel\_Trees}, $CP$ as the number of \texttt{Path\_Candidates}, and $BS$ as the effective \texttt{Beam\_Width} controlling pruning granularity.
In addition, we introduce derived attributes that make implicit interactions explicit. 
For instance, the effective resource amplification factor is computed as $\min(QP \cdot CP, 64)$, and the number of expansions per step is derived from $CP / BS$. 
Based on these quantities, each configuration is assigned a high-level \texttt{Strategy\_Mode} (e.g., \texttt{Fast-Inference}, \texttt{Balanced-Search}, \texttt{Deep-Reasoning}) and an associated \texttt{Optimization\_Priority} (e.g., \texttt{Latency-First}, \texttt{Balanced-Efficiency}, \texttt{Accuracy-First}).

\begin{quote}
\textbf{Example (TTS description):} \\
\texttt{Parallel\_Trees(QP): 4 | Path\_Candidates(CP): 16 | Beam\_Width(BS): 4 | Expansions\_per\_Step: 4 | Resource\_Impact: 16x | Strategy\_Mode: Balanced-Search | Optimization\_Priority: Balanced-Efficiency}
\end{quote}

\textbf{Action Semantic Representation.} We concatenate the model and TTS descriptions to form the final semantic description, which is then mapped into a 1024-dimensional semantic vector for each action using a pretrained text embedding model (\texttt{qwen-text-embedding-v4}).
This embedding process transforms the discrete, high-dimensional configuration space into a continuous latent space where semantic proximity reflects functional similarity. 
Unlike direct numerical encodings, this approach enables cross-model generalization by embedding diverse backbone models into a unified representation based on their empirical capability profiles. 
It also facilitates strategy-level similarity awareness, ensuring that TTS configurations with analogous resource allocation patterns are positioned closely in the vector space, regardless of their specific parameter values. 
Furthermore, the 1024-dimensional action embedding serves as a rich input for downstream bandit or regression models, allowing them to effectively model the complex non-linear interactions between model strength and inference-time scaling strategies. 
Consequently, the resulting vector provides a coherent and information-dense representation that supports efficient reward estimation and exploration within the \algname framework.

\subsection{Hyperparameters and Baseline Configurations}
\label{app:hyperparameters}
\textbf{Hyperparameter Details for \algname.} 
We established the following key hyperparameters:
\begin{enumerate}
\item \textbf{Exploration Factor} ($\alpha$): Set to $1.0$ to ensure sufficient exploration of the action space during the initial stages while facilitating rapid convergence to the optimal policy in later stages.
\item \textbf{Regularization Term} ($\lambda$): The regularization coefficient for ridge regression is set to $1.0$ to maintain numerical stability during the Gram matrix $\mathbf{A}$ inversion.
\end{enumerate}

\textbf{Reward Function Hyperparameters.} We defined two typical reward modes:
\begin{enumerate}
\item \textbf{Cost-Sensitive}: $w_1 = w_2 = 0.1, w_3 = 0.8$. 
This mode aims to identify the most cost-effective UIS configuration.
\item \textbf{Cost-Leaning}: $w_1 = w_2 = 0.2, w_3 = 0.6$. 
This mode is designed to favor cost efficiency while preserving acceptable performance.
\item \textbf{Quality-Leaning}: $w_1 = w_2 = 0.3, w_3 = 0.4$. 
This mode aims to emphasize inference quality while incorporating moderate cost considerations.
\item \textbf{Quality-Priority}: $w_1 = w_2 = 0.4, w_3 = 0.2$. 
This mode is designed to identify the UIS configurations with the highest inference quality.
\end{enumerate}

\textbf{Baseline Configurations.} Following the predictive routing paradigms in RouterBench \cite{hu2024routerbench}, we configured the predictive routing baselines as follows:
\begin{enumerate}
\item \textbf{MLP}: The MLP has an input dimension of 2048, an output dimension of 1, and a hidden layer of size 100. 
We train the MLP to minimize the mean squared error (MSE) loss for 1000 iterations after each new observation point $(\mathbf{x}_{t, a_{t}}, r_{t})$.
A default learning rate of 0.001 is used.
\item \textbf{k-NN}: We implement a k-NN router with $k=5$, which estimates the reward by averaging the outcomes of the $k$ most similar historical instances in the joint feature space:
\begin{equation}
\label{eq:knn}
a_t = \arg\max_{a \in \mathcal{A}} \left( \frac{1}{k} \sum_{i \in \mathcal{N}(\mathbf{x}_{t, a})} r_i \right),
\end{equation}
where $\mathcal{N}(\mathbf{x}_{t, a})$ denotes the set of $k$ indices $i < t$ whose historical vectors $\mathbf{x}_{i, a_i}$ exhibit the highest cosine similarity to the current candidate vector $\mathbf{x}_{t, a}$.
\item \textbf{Thompson Sampling} \cite{chapelle2011empirical}: 
We implement a linear Thompson Sampling strategy with Gaussian posterior sampling. 
At each iteration, the algorithm samples a parameter vector $\tilde{\boldsymbol{\theta}}_t$ from the posterior distribution:
\begin{equation}
\tilde{\boldsymbol{\theta}}_t \sim \mathcal{N}(\hat{\boldsymbol{\theta}}_t, \alpha^2 \mathbf{A}_t^{-1}),
\end{equation}
where $\hat{\boldsymbol{\theta}}_t = \mathbf{A}_t^{-1}\mathbf{b}_t$ denotes the posterior mean estimated from historical observations. 
The action is then selected according to:
\begin{equation}
a_t = \arg\max_{a \in \mathcal{A}} \mathbf{x}_{t,a}^{\top}\tilde{\boldsymbol{\theta}}_t.
\end{equation}
After observing the reward $r_t$, the posterior statistics are updated online using the newly observed tuple $(\mathbf{x}_{t,a_t}, r_t)$. 
We use the same regularization term $\lambda$ as \algname, with $\lambda=1.0$, and set the exploration factor $\alpha$ to 1.0.
\item \textbf{NeuralUCB} \cite{pmlr-v119-zhou20a}: 
We implement NeuralUCB based on the same MLP architecture and training configuration as the MLP baseline. 
At each iteration, the action is selected according to the upper confidence bound:
\begin{equation}
a_t = \arg\max_{a \in \mathcal{A}}
\left(f(\mathbf{x}_{t,a}; \boldsymbol{\theta}_t)+\alpha\sqrt{\mathbf{g}(\mathbf{x}_{t,a})^{\top}\mathbf{Z}_t^{-1}\mathbf{g}(\mathbf{x}_{t,a})}\right),
\end{equation}
where $f(\mathbf{x}_{t,a}; \boldsymbol{\theta}_t)$ denotes the neural reward predictor, and $\mathbf{g}(\mathbf{x}_{t,a}) = \nabla_{\boldsymbol{\theta}} f(\mathbf{x}_{t,a}; \boldsymbol{\theta}_t)$ represents the gradient feature used for uncertainty estimation. 
After each interaction step, the model is updated online using all historical observations.
We set the exploration factor $\alpha$ to 1.0.
\end{enumerate}

\begin{figure}[!t]
\vskip 0.2in
\begin{center}
\centerline{\includegraphics[width=0.95\textwidth]{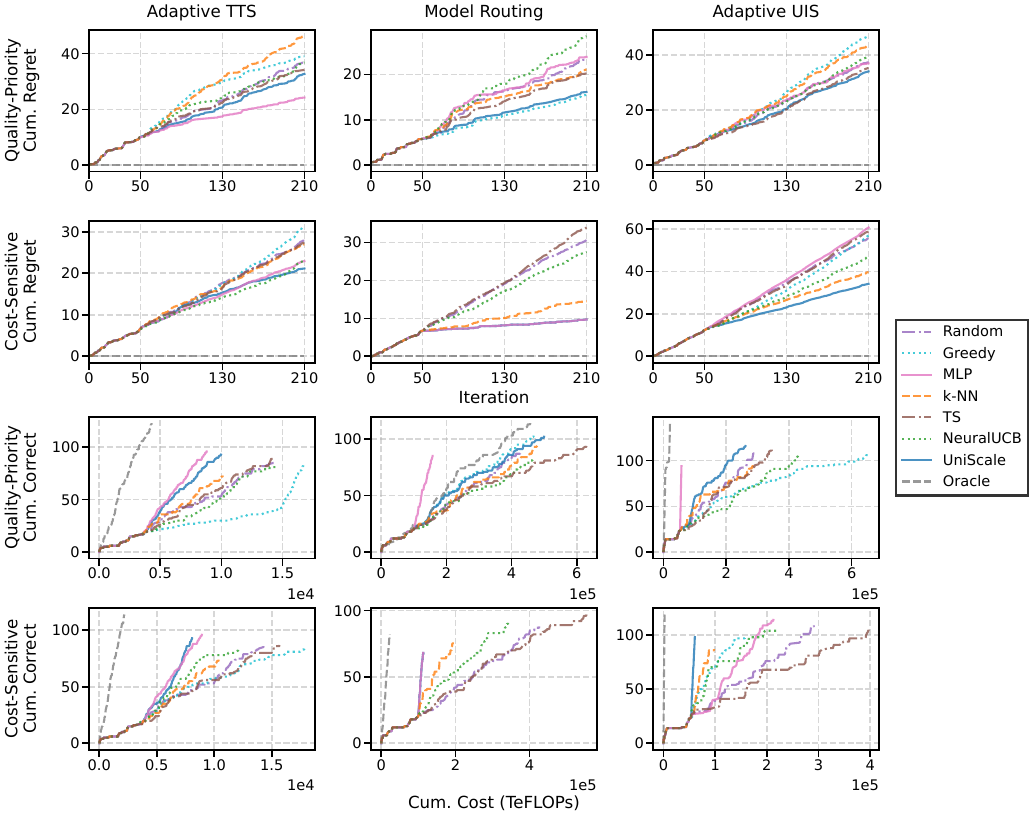}}
\caption{Performance comparison of \algname\ and baselines across various scaling paradigms. The columns represent Adaptive TTS, Model Routing, and Adaptive UIS, respectively. The top two rows display the cumulative regret versus iterations, while the bottom two rows illustrate the cost-benefit efficiency (Cumulative Correct counts versus Cumulative Cost in TeFLOPs). Results are evaluated under both Quality-Priority and Cost-Sensitive reward modes.}
\label{fig:main}
\end{center}
\vskip -0.2in
\end{figure}

\subsection{Detailed Results on Main Experiment}
\label{app:detail_result}

The dynamic curves in \cref{fig:main} reveal distinct behavioral patterns for each baseline across different scaling scenarios and reward modes.

\textbf{Greedy Baseline.} 
This approach excels in low-dimensional and sparse action spaces, such as Model Routing. Its performance closely matches \algname in these settings, as the reward estimator accurately identifies optimal models during the warm-up phase without requiring further exploration. 
However, in high-dimensional spaces (TTS and UIS), Greedy suffers from a significant exploration gap, resulting in higher cumulative regret as it fails to navigate complex configuration boundaries.

\textbf{MLP Baseline.}
MLP shows competitive performance only in Adaptive TTS, where the reward landscape is relatively smooth and monotonic with respect to the scale of inference structures. 
In the more complex scenarios, it exhibits high variance and instability. 
The discrete performance jumps between heterogeneous model architectures lead to prolonged performance plateaus and erratic jumps in efficiency, highlighting the difficulty of fitting non-continuous joint spaces with sparse online samples.

\textbf{k-NN Baseline.}
k-NN is highly effective in Cost-Sensitive mode across all scenarios. 
This is because inference costs are naturally clustered in the semantic space, allowing passive similarity matching to efficiently locate low-cost configurations. 
Conversely, it underperforms in Quality-Priority mode because the lack of an active exploration mechanism like \algname prevents it from proactively discovering non-linear, query-specific accuracy peaks, leaving it limited by the distribution of historical samples.

\begin{table*}[!t]
\centering
\caption{Main performance comparison across BEST-Route* and UIS scenarios. Results are reported under Cost-Sensitive and Quality-Priority reward modes (excluding the 50-step warm-up phase).}
\label{tab:simplified-performance}
\small
\begin{tabular}{ll cc cc}
\toprule
\multirow{2}{*}{\textbf{Method}} & \multirow{2}{*}{\textbf{Metric}} & \multicolumn{2}{c}{\textbf{Cost-Sensitive}} & \multicolumn{2}{c}{\textbf{Quality-Priority}} \\
\cmidrule(lr){3-4} \cmidrule(lr){5-6}
& & \textbf{BEST-Route*} & \textbf{UIS} & \textbf{BEST-Route*} & \textbf{UIS} \\
\midrule
\multirow{3}{*}{\makecell{\algname\\(ours)}} & Reward $(\uparrow)$ & 0.5500 & 0.6973 & 0.5507 & 0.6326 \\
 & Accuracy $(\uparrow)$ & 35.00 & 46.25 & 50.63 & 57.50 \\
 & Cost $(\downarrow)$ & 1119.8 & 48.2 & 3537.7 & 1303.3 \\
\midrule
\multirow{3}{*}{\color{gray}Oracle} & \color{gray}Reward $(\uparrow)$ & \color{gray}0.6279 & \color{gray}0.8335 & \color{gray}0.6504 & \color{gray}0.7899 \\
 & \color{gray}Accuracy $(\uparrow)$ & \color{gray}42.50 & \color{gray}56.87 & \color{gray}59.38 & \color{gray}67.50 \\
 & \color{gray}Cost $(\downarrow)$ & \color{gray}103.6 & \color{gray}10.4 & \color{gray}3163.9 & \color{gray}113.9 \\
\bottomrule
\end{tabular}
\vskip -0.1in
\end{table*}

\subsection{Additional Comparisons with BEST-Route}
\label{app:additional_comparisons_with_best_route}

To comprehensively evaluate the advantages of the UIS paradigm, we conduct an additional comparison against BEST-Route. 
To ensure a compatible evaluation, we define a BEST-Route* scenario that adopts a restricted search space consistent with the original BEST-Route framework. 
Specifically, the router is constrained to select a single configuration from 120 candidates (6 models × 20 TTS actions). 
These include six Qwen3 models (0.6B, 1.7B, 4B, 8B, 14B, 32B) and a Best-of-N strategy with $N \in \{1, \dots, 20\}$, parameterized as $QP \in \{1, \dots, 20\}$ with $CP = BS = 1$. 
Furthermore, Skywork-o1-Open-PRM-Qwen-2.5-1.5B is employed as the underlying verifier for process evaluation.  

As demonstrated by the theoretical Oracle performance, BEST-Route's space has a significantly lower upper bound than the global UIS space. 
Under the Cost-Sensitive mode, the full UIS space achieves a notable accuracy improvement (+14.37pp) while consuming only roughly 10\% of the computational cost compared to the BEST-Route* space. 
Similarly, in the Quality-Priority mode, the UIS space elevates the accuracy ceiling (+8.12pp) at only 3.6\% of the corresponding cost. 
This confirms that relying solely on one-dimensional test-time scaling restricts the system's capacity to optimize the fine-grained quality--cost frontier.  

Notably, \algname also performs effectively when exploring within the BEST-Route* space, efficiently identifying the best available configurations given the spatial constraints. 
However, the framework's full potential is uniquely unlocked when operating in the joint UIS space. 
In the Cost-Sensitive mode, \algname operating in the full UIS space achieves a double-digit accuracy gain (+11.25pp) while eliminating over 95\% of the inference overhead compared to its performance in the restricted space. 
In the Quality-Priority mode, the unified approach yields an additional accuracy boost (+6.87pp) while simultaneously reducing the computational cost by more than 60\%. 
These empirical results validate that our joint optimization fundamentally expands the efficiency boundaries beyond earlier routing paradigms. 

\subsection{Generalization Across Diverse Tasks}
\label{app:generalization_tasks}

\begin{table}[!t]
\centering
\caption{Main performance comparison on the coding task under the Cost-Sensitive reward mode within the UIS scenario.}
\label{tab:generalization_tasks}
\small
\begin{tabular}{lccc}
\toprule
\textbf{Method} & \textbf{Reward $(\uparrow)$} & \textbf{Accuracy $(\uparrow)$} & \textbf{Cost $(\downarrow)$} \\ 
\midrule
k-NN     & 0.7943 & \textbf{53.13} & 14039.5 \\
\algname & \textbf{0.8125} & 52.50 & \textbf{5619.6} \\
\midrule
\textit{Relative Change} & +0.0182 & -0.63pp & -59.97\% \\
\bottomrule
\end{tabular}
\vskip -0.1in
\end{table}

To validate the task-agnostic generalizability of \algname, we conduct an empirical evaluation on a coding benchmark as a representative case study. 
For this setup, the candidate model pool is selected as a subset of those in \cref{tab:uss}, specifically comprising Qwen3-4B and Qwen3-8B, while the available TTS strategies and the verifier remain identical to our primary experiments. 
The evaluation dataset comprises 210 instances sampled from LiveCodeBench \cite{ICLR2025_94074dd5}. 
Following the official evaluation paradigm, it consists of 77 standard-input and 133 call-based instances, classified into 91 easy and 119 medium problems.
These 210 instances directly correspond to the experimental workflow, which consists of a 50-step warm-up followed by 160 policy-driven iterations. All trials are evaluated under the Cost-Sensitive reward mode within the full joint UIS optimization space.

The empirical comparisons are reported in \cref{tab:generalization_tasks}. The results clearly highlight \algname's exceptional data efficiency and orchestration capability under a completely different task profile. 
Compared to the competitive k-NN baseline, \algname yields a remarkable improvement in operational efficiency, slashing the total computational cost by -59.97\%. 
Although this stringent cost restriction leads to a marginal degradation in absolute accuracy (-0.63pp), the joint optimization mechanism effectively balances the quality--cost trade-off, culminating in an overall gain of +0.0182 in mean reward. 
This successful alignment under a non-mathematical task configuration firmly substantiates that \algname's core scheduling logic is inherently task-agnostic. 
We leave the extensive evaluation and methodological scaling toward broader open-ended task environments to future work.



\section{Detailed Results on Ablation Study}
\label{app:details_on_ablation}

\begin{figure}[!t]
\vskip 0.2in
\begin{center}
\centerline{\includegraphics[width=0.8\textwidth]{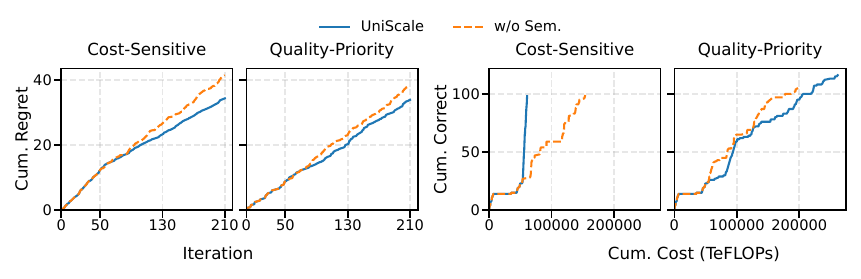}}
\caption{Performance comparison between \algname\ and a non-semantic baseline (\textit{w/o Sem.}) across different reward modes. The left panels display cumulative regret versus iterations, while the right panels illustrate cumulative correct counts versus cumulative inference cost (TeFLOPs).}
\label{fig:semantic}
\end{center}
\vskip -0.2in
\end{figure}

\subsection{Detailed Results for Effectiveness of Action Semantic Representations}
\label{app:details_result_action_semantic}

\cref{fig:semantic} provides the comprehensive performance visualizations for action semantic representations across both Cost-Sensitive and Quality-Priority modes.

\textbf{Learning Trajectory (Left Panels).} 
The cumulative regret curves illustrate that the advantage of action semantics emerges immediately following the warm-up phase. 
The consistently lower slope of \algname compared to the \textit{w/o Sem.} baseline confirms that mapping configurations into a unified semantic space allows for efficient cross-action knowledge transfer, reducing the exploration overhead in high-dimensional action spaces.

\textbf{Marginal Cost-Benefit (Right Panels).} 
The efficiency curves visualize the trade-off range discussed in \cref{sec:ablation_study}. 
Notably, in Cost-Sensitive mode, \algname exhibits a near-vertical ascent in cumulative correctness at extremely low cost levels. 
In Quality-Priority mode, the curves demonstrate that \algname achieves a higher performance ceiling than the baseline, proving that semantic awareness allows the system to identify high-quality configurations that are otherwise difficult to locate via independent one-hot encoding.

\begin{figure}[!t]
\vskip 0.2in
\begin{center}
\centerline{\includegraphics[width=0.95\textwidth]{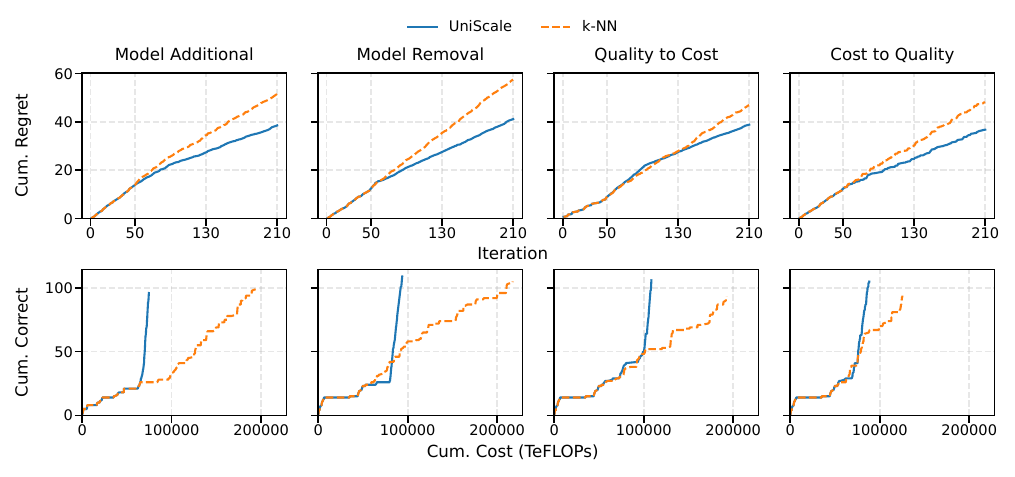}}
\caption{Robustness comparison between \algname\ and k-NN under non-stationary drifts. The top row displays cumulative regret versus iterations, and the bottom row shows cumulative correct counts versus inference cost (TeFLOPs). Environmental drifts are introduced at the 51st iteration, encompassing action space dynamics (Model Addition/Removal) and reward mode shifts (between Quality-Priority and Cost-Sensitive).}
\label{fig:robustness_all}
\end{center}
\vskip -0.2in
\end{figure}

\subsection{Detailed Results for Robustness to Non-stationary Environmental Drifts}
\label{app:details_result_environmental_drift}

\cref{fig:robustness_all} illustrates the dynamic performance evolution of \algname and k-NN across four typical non-stationary environments. 
The online adaptability of the framework is intuitively reflected through the changes in the slopes of the cumulative regret curves (top row) and efficiency curves (bottom row). 
During the warm-up phase (iterations 1–50), the cumulative regret slopes are nearly identical and rise consistently across all subplots, reflecting the parity in environmental perception when both algorithms employ the same random exploration strategy.
Upon reaching the environmental drift point at the 51st iteration, their decision trajectories diverge sharply:

\textbf{Action Space Dynamics.} 
In the Model Additional environment (under Cost-Sensitive mode), \algname rapidly identifies and exploits the newly introduced low-cost Qwen-0.6B and 1.7B nodes. 
Consequently, its cumulative regret slope flattens significantly immediately after the drift, whereas the reduction in the slope of k-NN's regret growth lags behind, indicating weaker adaptability to action space expansion. 
In the Model Removal environment, the loss of low-cost nodes causes a sudden spike in the regret slopes for both methods. 
However, \algname leverages policy recalibration within the semantic space to quickly bring its slope back down to a level lower than that of the warm-up phase. 
In contrast, the regret slope for k-NN remains consistently higher than its initial warm-up rate.

\textbf{Reward Mode Shifts.} 
When transitioning from Quality-Priority to Cost-Sensitive, \algname initially maintains a regret slope slightly higher than that of k-NN due to the activation of its re-exploration mechanism. 
However, it eventually converges to a much flatter slope than the baseline, allowing its cumulative regret to overtake k-NN before the 130th iteration. 
In the transition from Cost-Sensitive to Quality-Priority, \algname similarly demonstrates agile adaptability. 
Its regret slope stabilizes rapidly after a brief fluctuation, proving the algorithm's ability to precisely locate and smoothly migrate to high-performance configuration regions, thereby effectively compensating for the performance loss caused by the mode switch.

Collectively, these results demonstrate that \algname can consistently select superior UIS configurations through wide-range, fine-grained quality--cost trade-offs in dynamic and non-stationary production environments, thereby maximizing the overall efficacy of the UIS paradigm.

\begin{figure}[!t]
\vskip 0.2in
\begin{center}
\centerline{\includegraphics[width=0.8\textwidth]{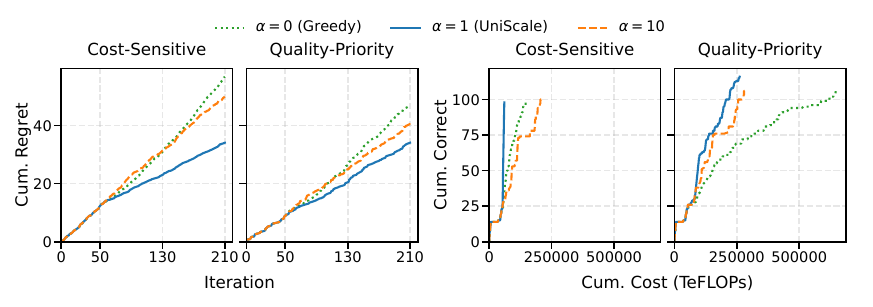}}
\caption{Sensitivity analysis of the exploration factor $\alpha$. The left panels show the cumulative regret over iterations, while the right panels illustrate the cumulative correct counts versus cumulative inference cost (TeFLOPs). Results are compared across Cost-Sensitive and Quality-Priority modes. $\alpha = 1$ represents the default configuration of \algname, while $\alpha = 0$ corresponds to a purely Greedy strategy.}
\label{fig:alpha}
\end{center}
\vskip -0.2in
\end{figure}

\subsection{Sensitivity analysis of the exploration factor $\alpha$}
\label{app:sensitivity}

\cref{fig:alpha} illustrates the impact of the exploration factor $\alpha$ on algorithm performance. $\alpha$ governs the trade-off between exploiting known high-reward configurations and exploring uncertain regions of the action space.

\textbf{$\alpha = 0$ (Purely Greedy).} 
In the left panel, the cumulative regret curves for $\alpha=0$ (dotted line) exhibit a higher growth rate following the warm-up phase. 
This suggests that without an active exploration mechanism, the system becomes trapped in local optima, failing to discover higher-reward actions within the UIS space and resulting in suboptimal efficiency boundaries in the right panel.

\textbf{$\alpha = 10$ (Excessive Exploration).} 
While covering more of the search space, the left panel shows that $\alpha=10$ consistently maintains a high level of cumulative regret. 
The efficiency curves in the right panel reveal a significant exploration overhead, where the algorithm consumes excessive computational budget on suboptimal configurations, leading to a slower ascent in cumulative correctness compared to $\alpha=1$.

\textbf{$\alpha = 1$ (Balanced Exploration).} 
Across all reward modes, $\alpha=1$ (solid line) consistently achieves the lowest cumulative regret (left panel) and the steepest efficiency trajectory (right panel). 
These results confirm that a balanced, principled exploration intensity is essential for accelerating policy optimization and achieving superior wide-range quality--cost trade-offs.

\begin{table*}[!t]
\centering
\caption{Main performance comparison across different PRMs under Cost-Sensitive and Quality-Priority reward modes (excluding the 50-step warm-up phase). Specifically, PRM-1.5B and PRM-7B denote configurations using Skywork-o1-Open-PRM-Qwen-2.5-1.5B and Skywork-o1-Open-PRM-Qwen-2.5-7B as the verifiers, respectively. \textbf{Best} results are highlighted.}
\label{tab:sensitivity_prm}
\small
\begin{tabular}{ll cc cc}
\toprule
\multirow{2}{*}{\textbf{Method}} & \multirow{2}{*}{\textbf{Metric}} & \multicolumn{2}{c}{\textbf{Cost-Sensitive}} & \multicolumn{2}{c}{\textbf{Quality-Priority}} \\
\cmidrule(lr){3-4} \cmidrule(lr){5-6}
& & \textbf{UIS (PRM-7B)} & \textbf{UIS (PRM-1.5B)} & \textbf{UIS (PRM-7B)} & \textbf{UIS (PRM-1.5B)} \\
\midrule
\multirow{3}{*}{k-NN} & Reward $(\uparrow)$ & 0.6615 & 0.6542 & 0.6128 & 0.5740 \\
 & Accuracy $(\uparrow)$ & 41.25 & 39.37 & 55.63 & 43.75 \\
 & Cost $(\downarrow)$ & 249.8 & 148.7 & 1235.3 & 996.2 \\
\midrule
\multirow{3}{*}{\makecell{\algname\\(ours)}} & Reward $(\uparrow)$ & \textbf{0.7092} & \textbf{0.6973} & \textbf{0.6562} & \textbf{0.6326} \\
 & Accuracy $(\uparrow)$ & 48.13 & 46.25 & \textbf{63.75} & \textbf{57.50} \\
 & Cost $(\downarrow)$ & \textbf{79.9} & \textbf{48.2} & 1485.8 & 1303.3 \\
\bottomrule
\end{tabular}
\end{table*}

\subsection{Sensitivity to the Underlying Verifier}
\label{app:sensitivity_to_the_underlying_verifier}

To verify the robustness and compatibility of the framework within the adaptive UIS scenario, we evaluate \algname against the k-NN baseline across different Process Reward Model (\text{PRM-7B} and \text{PRM-1.5B}), as detailed in \cref{tab:sensitivity_prm}. 
The empirical results demonstrate that \algname consistently delivers superior trade-off efficiency while highlighting distinct behaviors across varying verifier capacities:

\textbf{Cost-Sensitive Mode.} 
Under this configuration, \algname achieves a remarkable reduction in inference overhead while simultaneously improving accuracy over the baseline. 
Specifically, it cuts the computational cost significantly (-68\%) for both the \text{PRM-7B} and \text{PRM-1.5B} verifiers.
When comparing \algname's performance across the two PRM scales in this mode (where LLM inference is lightweight), the additional cost of using the larger verifier is more pronounced (cost +66\%, accuracy +1.88 pp). 
This highlights \algname's capability to effectively steer policies toward low-cost configuration spaces under constrained budgets.

\textbf{Quality-Priority Mode.} 
When the system shifts toward prioritizing generation accuracy, \algname effectively pushes the performance ceiling of the UIS paradigm. 
Compared to the k-NN baseline, it elevates the evaluation accuracy (+8.12 pp) for the \text{PRM-7B} verifier and yields a significant absolute improvement (+13.75 pp) for the PRM-1.5B verifier. 
Furthermore, scaling up the underlying PRM within \algname in this mode (where LLM inference dominates) results in a modest cost increase (+14\%) alongside a significant accuracy improvement (+6.25 pp). 
Although prioritizing quality inherently drives up the absolute test-time cost, \algname consistently maintains a superior overall reward-cost trade-off compared to the baseline.

Overall, these results show that while stronger PRMs can further enhance performance, \algname remains highly effective even with weaker verifiers, indicating limited sensitivity to PRM quality and robustness to verification noise.

\begin{table}[!t]
\centering
\small
\caption{Latency breakdown of \algname components in the UIS space (Quality-Priority mode). Results show the cumulative execution time across 160 post-warm-up iterations.}
\label{tab:latency_breakdown}
\begin{tabular}{lcccccc}
\toprule
\textbf{Metric} & \textbf{Query Embedding} & \textbf{Bandit Overhead} & \textbf{LLM Inference} & \textbf{PRM Verification} & \textbf{TTS Overheads} & \textbf{Total Time} \\ 
\midrule
Time (s) & 16.79 & 3.15 & 1941.84 & 175.31 & 49.23 & 2186.31 \\
Ratio &0.77\% & 0.14\% & 88.82\% & 8.02\% & 2.25\% & 100.00\% \\ 
\bottomrule
\end{tabular}
\vskip -0.1in
\end{table}

\subsection{System Overhead and Latency Breakdown}
\label{app:system_overhead_and_latency_breakdown}

To assess the practical deployment efficiency of \algname, we analyze the cumulative latency across internal components. 
In \cref{tab:latency_breakdown}, Query Embedding corresponds to the semantic mapping defined in \cref{sec:updating_the_reward_estimator}. Bandit Overhead encompasses both the reward estimator update in \cref{sec:updating_the_reward_estimator} and the action acquisition step using the LinUCB algorithm in \cref{sec:selecting_the_next_inference_configuration}.
Within the execution phase outlined in \cref{sec:executing_inference}, LLM Inference maps to State Generation, PRM Verification corresponds to Process Verification, and TTS Overheads encapsulate the remaining operational steps.

Crucially, the total online overhead introduced by \algname's core orchestration modules, specifically the combined latency of Query Embedding and Bandit Overhead, accounts for a mere 0.91\% of the total execution time. 
This exceptional efficiency is primarily achieved through two mechanisms: first, action semantic embeddings are pre-computed offline and excluded from the runtime overhead, meaning that real-time processing only requires a query embedding process to form the final joint representation of dimension $d=2048$; 
second, we utilize the Sherman-Morrison formula for rank-1 bandit updates, which requires only $\mathcal{O}(d^2)$ operations instead of an explicit $\mathcal{O}(d^3)$ matrix inversion. 
This confirms that the framework components introduced by \algname impose a negligible computational footprint relative to the core generative process.

\section{More Details on \algname Algorithm}
\label{app:details_on_uniscale}

\subsection{More Details on Principled Uncertainty Measure}
\label{app:linucb}

This section provides a formal derivation of the principled uncertainty metric $\sqrt{\mathbf{x}_{t, a}^{\top} \mathbf{A}_{t}^{-1} \mathbf{x}_{t, a}}$ (also denoted as $\|\mathbf{x}_{t, a}\|_{\mathbf{A}_{t}^{-1}}$), alongside its physical interpretation within the UIS space.

\textbf{Mathematical Derivation.} 
To establish a rigorous foundation for the reward prediction mechanism in \algname, we begin by characterizing the underlying reward generating process.
\begin{assumption}[\textbf{Linear Reward and sub-Gaussian Noise}]
\label{ass:linear_reward}
There exists an unknown true parameter vector $\boldsymbol{\theta}^* \in \mathbb{R}^d$ such that for any action $a$ with its feature vector $\mathbf{x}_{t,a}$, the observed reward $r_{t,a}$ satisfies a linear relationship:
\begin{equation}
r_{t,a} = \mathbf{x}_{t,a}^\top \boldsymbol{\theta}^* + \eta_t,
\end{equation}
where $\eta_t$ represents a $\sigma$-sub-Gaussian random noise reflecting the stochastic nature of the environment.
\end{assumption}

Based on this linear assumption, we can utilize ridge regression to estimate the unknown vector $\boldsymbol{\theta}^*$ from historical observations.
\begin{definition}[\textbf{Ridge Regression Estimator}]
Given the history of observations up to time $t-1$, denoted as $\mathcal{H}_{t-1} = \{(\mathbf{x}_{\tau, a_\tau}, r_\tau)\}_{\tau=1}^{t-1}$, we define the Gram matrix $\mathbf{A}_t$ and the cumulative reward-weighted vector $\mathbf{b}_t$ as:
\begin{equation}
\mathbf{A}_t = \lambda \mathbf{I} + \sum_{\tau=1}^{t-1} \mathbf{x}_{\tau, a_\tau} \mathbf{x}_{\tau, a_\tau}^\top,
\end{equation}
\begin{equation}
\mathbf{b}_t = \sum_{\tau=1}^{t-1} r_\tau \mathbf{x}_{\tau, a_\tau},
\end{equation}
where $\lambda > 0$ is the regularization parameter. 
The ridge regression estimate of the parameter vector $\boldsymbol{\theta}$, denoted as $\hat{\boldsymbol{\theta}}_t$, is given by:
\begin{equation}
\hat{\boldsymbol{\theta}}_t = \mathbf{A}_t^{-1} \mathbf{b}_t.
\end{equation}
\end{definition}

To quantify the uncertainty of this estimate, we introduce the following norm to measure distances in the feature-weighted space.
\begin{definition}[\textbf{Mahalanobis Norm}]
For a positive definite matrix $\mathbf{A}$, the weighted norm of a vector $\mathbf{z}$ is defined as $\|\mathbf{z}\|_{\mathbf{A}} = \sqrt{\mathbf{z}^\top \mathbf{A} \mathbf{z}}$.
\end{definition}

Building upon linear bandit theory \cite{abbasi2011improved}, the relationship between our estimate $\hat{\boldsymbol{\theta}}_t$ and the true parameter $\boldsymbol{\theta}^*$ can be bounded within a high-probability region.
\begin{lemma}[\textbf{Confidence Ellipsoid}]
Under Assumption \ref{ass:linear_reward}, for any $\delta \in (0, 1)$, the true parameter $\boldsymbol{\theta}^*$ resides within a confidence ellipsoid $\mathcal{E}_t$ centered at $\hat{\boldsymbol{\theta}}_t$ with probability at least $1-\delta$:
\begin{equation}
\mathcal{E}_t = \left\{ \boldsymbol{\theta} : \|\hat{\boldsymbol{\theta}}_t - \boldsymbol{\theta}\|_{\mathbf{A}_t} \leq \beta_t \right\},
\end{equation}
where $\beta_t$ is a scaling factor that depends on the time step $t$ and the desired confidence level $\delta$.
\end{lemma}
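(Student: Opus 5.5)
The plan is to reproduce the self-normalized martingale argument of \cite{abbasi2011improved}, with the explicit choice
\[
\beta_t = \sigma\sqrt{2\log(1/\delta) + \log\bigl(\det(\mathbf{A}_t)/\lambda^d\bigr)} + \sqrt{\lambda}\,S .
\]
This requires two side assumptions that the statement leaves implicit: a norm bound $\|\boldsymbol{\theta}^*\|_2 \le S$, and conditional sub-Gaussianity of $\eta_\tau$ given the past. The latter means $\eta_\tau$ is conditionally $\sigma$-sub-Gaussian with respect to the filtration $\mathcal{F}_\tau$ generated by $\mathbf{x}_{1,a_1},\eta_1,\dots,\mathbf{x}_{\tau,a_\tau}$. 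This is needed because the actions are chosen adaptively by the algorithm.

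\textbf{Step 1: error decomposition.} Let $\mathbf{S}_t = \sum_{\tau<t} \eta_\tau \mathbf{x}_{\tau,a_\tau}$. Substituting $r_\tau = \mathbf{x}_{\tau,a_\tau}^\top\boldsymbol{\theta}^*+\eta_\tau$ into $\mathbf{b}_t$ gives
\[
\hat{\boldsymbol{\theta}}_t - \boldsymbol{\theta}^* = \mathbf{A}_t^{-1}\mathbf{S}_t - \lambda \mathbf{A}_t^{-1}\boldsymbol{\theta}^* .
\]
Taking the $\mathbf{A}_t$-norm and applying the triangle inequality yields
\[
\|\hat{\boldsymbol{\theta}}_t-\boldsymbol{\theta}^*\|_{\mathbf{A}_t} \le \|\mathbf{S}_t\|_{\mathbf{A}_t^{-1}} + \lambda\|\boldsymbol{\theta}^*\|_{\mathbf{A}_t^{-1}} .
\]
Since $\mathbf{A}_t \succeq \lambda\mathbf{I}$, the second term is at most $\sqrt{\lambda}\,S$.

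\textbf{Step 2: self-normalized bound (the main obstacle).} We need to show that, with probability at least $1-\delta$ simultaneously for all $t$,
\[
\|\mathbf{S}_t\|^2_{\mathbf{A}_t^{-1}} \le 2\sigma^2\log\!\bigl(\det(\mathbf{A}_t)^{1/2}\det(\lambda\mathbf{I})^{-1/2}/\delta\bigr).
\]
The argument proceeds in four parts.
\begin{itemize}
\item For fixed $\mathbf{u}$, sub-Gaussianity makes
\[
M_t^{\mathbf{u}} = \exp\!\Bigl(\mathbf{u}^\top\mathbf{S}_t/\sigma - \tfrac12\|\mathbf{u}\|^2_{\bar{\mathbf{V}}_t}\Bigr), \qquad \bar{\mathbf{V}}_t=\mathbf{A}_t-\lambda\mathbf{I},
\]
a nonnegative supermartingale with $\mathbb{E}[M_t^{\mathbf{u}}] \le 1$.
\item Integrate $\mathbf{u}$ against the Gaussian prior $\mathcal{N}(0,\lambda^{-1}\mathbf{I})$ (the method of mixtures). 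Completing the square gives the closed form
\[
\bar M_t = \bigl(\det(\lambda\mathbf{I})/\det(\mathbf{A}_t)\bigr)^{1/2}\exp\!\bigl(\|\mathbf{S}_t\|^2_{\mathbf{A}_t^{-1}}/(2\sigma^2)\bigr),
\]
and $\bar M_t$ is still a supermartingale with $\mathbb{E}[\bar M_t]\le 1$ by Fubini.
\item Uniformity in $t$ is the delicate point, since $t$ may be random. Handle it with the stopping-time construction: let $\tau$ be the first time the bound fails, and apply Fatou together with optional stopping to obtain $\mathbb{E}[\bar M_\tau]\le 1$. Alternatively, use Ville's maximal inequality directly.
\item Markov's inequality then gives $\Pr(\bar M_\tau \ge 1/\delta)\le\delta$, which is exactly the claimed bound.
\end{itemize}

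\textbf{Step 3: combine.} Substituting Step 2 into Step 1 gives $\|\hat{\boldsymbol{\theta}}_t-\boldsymbol{\theta}^*\|_{\mathbf{A}_t}\le\beta_t$, i.e.\ $\boldsymbol{\theta}^*\in\mathcal{E}_t$ for all $t$ with probability at least $1-\delta$.

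Optionally, add the explicit form of $\beta_t$. If $\|\mathbf{x}\|\le L$, AM–GM gives $\det\mathbf{A}_t \le (\lambda + tL^2/d)^d$, so $\beta_t = O\bigl(\sigma\sqrt{d\log(t/\delta)}+\sqrt{\lambda}S\bigr)$. This justifies interpreting $\alpha$ in \cref{eq:linucb} as a surrogate for $\beta_t$.
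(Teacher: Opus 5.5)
Your proposal is correct and follows the paper's route: the paper gives no argument of its own and instead invokes the self-normalized confidence bound of \cite{abbasi2011improved}, and your Steps 1--3 (ridge error decomposition, method-of-mixtures supermartingale with the $\mathcal{N}(0,\lambda^{-1}\mathbf{I})$ prior, stopping-time/Ville uniformity) reconstruct exactly that argument and yield the standard $\beta_t$. The two hypotheses you make explicit are conditional $\sigma$-sub-Gaussianity of $\eta_\tau$ given the past and $\|\boldsymbol{\theta}^*\|_2\le S$; these are what the cited theorem requires and what the paper's Assumption leaves implicit, so stating them is appropriate rather than a deviation.
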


By projecting this ellipsoid onto the direction of a new action's feature vector, we obtain a formal bound for the reward prediction error.
\begin{proposition}[\textbf{Reward Deviation Bound}]
For any candidate feature vector $\mathbf{x}_{t,a}$, the deviation between the predicted reward $\mathbf{x}_{t,a}^\top \hat{\boldsymbol{\theta}}_t$ and the expected true reward $\mathbf{x}_{t,a}^\top \boldsymbol{\theta}^*$ is bounded by the uncertainty of the action in the current semantic space:
\begin{equation}
|\mathbf{x}_{t,a}^\top \hat{\boldsymbol{\theta}}_t - \mathbf{x}_{t,a}^\top \boldsymbol{\theta}^*| \leq \|\mathbf{x}_{t,a}\|_{\mathbf{A}_t^{-1}} \|\hat{\boldsymbol{\theta}}_t - \boldsymbol{\theta}^*\|_{\mathbf{A}_t} \leq \beta_t \sqrt{\mathbf{x}_{t,a}^\top \mathbf{A}_t^{-1} \mathbf{x}_{t,a}}
\end{equation}
\end{proposition}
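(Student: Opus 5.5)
The plan is to write the prediction error as a single inner product and then apply a weighted Cauchy--Schwarz inequality that pairs the norm induced by $\mathbf{A}_t$ with the norm induced by $\mathbf{A}_t^{-1}$. After that, the Confidence Ellipsoid lemma supplies the second inequality directly. By linearity,
\begin{equation*}
\mathbf{x}_{t,a}^\top \hat{\boldsymbol{\theta}}_t - \mathbf{x}_{t,a}^\top \boldsymbol{\theta}^* = \mathbf{x}_{t,a}^\top (\hat{\boldsymbol{\theta}}_t - \boldsymbol{\theta}^*),
\end{equation*}
so it suffices to bound $|\mathbf{z}^\top \mathbf{w}|$ with $\mathbf{z} = \mathbf{x}_{t,a}$ and $\mathbf{w} = \hat{\boldsymbol{\theta}}_t - \boldsymbol{\theta}^*$.

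For the first inequality, note that $\mathbf{A}_t = \lambda \mathbf{I} + \sum_{\tau} \mathbf{x}_{\tau,a_\tau}\mathbf{x}_{\tau,a_\tau}^\top$ with $\lambda > 0$. It is therefore symmetric positive definite, and it has a symmetric positive definite square root $\mathbf{A}_t^{1/2}$ with inverse $\mathbf{A}_t^{-1/2}$. Inserting $\mathbf{I} = \mathbf{A}_t^{-1/2}\mathbf{A}_t^{1/2}$ gives
\begin{equation*}
\mathbf{z}^\top \mathbf{w} = (\mathbf{A}_t^{-1/2}\mathbf{z})^\top (\mathbf{A}_t^{1/2}\mathbf{w}).
\end{equation*}
The ordinary Euclidean Cauchy--Schwarz inequality then yields
\begin{equation*}
|\mathbf{z}^\top \mathbf{w}| \le \|\mathbf{A}_t^{-1/2}\mathbf{z}\|_2 \, \|\mathbf{A}_t^{1/2}\mathbf{w}\|_2 .
\end{equation*}
Expanding these norms gives $\|\mathbf{A}_t^{-1/2}\mathbf{z}\|_2^2 = \mathbf{z}^\top \mathbf{A}_t^{-1}\mathbf{z} = \|\mathbf{z}\|_{\mathbf{A}_t^{-1}}^2$ and $\|\mathbf{A}_t^{1/2}\mathbf{w}\|_2^2 = \|\mathbf{w}\|_{\mathbf{A}_t}^2$, which is exactly the first inequality. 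Here the Mahalanobis Norm definition is used with both $\mathbf{A}_t$ and $\mathbf{A}_t^{-1}$, and the latter is also positive definite.

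For the second inequality, invoke the Confidence Ellipsoid lemma. With probability at least $1-\delta$ we have $\boldsymbol{\theta}^* \in \mathcal{E}_t$, that is, $\|\hat{\boldsymbol{\theta}}_t - \boldsymbol{\theta}^*\|_{\mathbf{A}_t} \le \beta_t$. Multiplying by the nonnegative quantity $\|\mathbf{x}_{t,a}\|_{\mathbf{A}_t^{-1}} = \sqrt{\mathbf{x}_{t,a}^\top \mathbf{A}_t^{-1}\mathbf{x}_{t,a}}$ gives the bound. The first inequality is deterministic; the second holds on this high-probability event, so the proof should state that caveat explicitly.

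There is no real obstacle here; the derivation is short. The only point needing care is the positive definiteness of $\mathbf{A}_t$, which justifies both the square root and the inverse; it comes from $\lambda > 0$. Uniformity of the event over all $t$ and $a$ is not an issue, because the event in the lemma concerns $\boldsymbol{\theta}^*$ alone, so a single event covers every candidate $\mathbf{x}_{t,a}$ simultaneously.
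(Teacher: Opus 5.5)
Your proof is correct and follows the paper's route. The paper just cites the generalized Cauchy--Schwarz inequality $|\mathbf{u}^\top \mathbf{v}| \le \|\mathbf{u}\|_{\mathbf{A}^{-1}}\|\mathbf{v}\|_{\mathbf{A}}$ with $\mathbf{u} = \mathbf{x}_{t,a}$ and $\mathbf{v} = \hat{\boldsymbol{\theta}}_t - \boldsymbol{\theta}^*$, which you derive explicitly via $\mathbf{A}_t^{\pm 1/2}$, and it uses the confidence ellipsoid implicitly for the second step. You also note explicitly that the second inequality holds only on the probability-$(1-\delta)$ event, a precision the paper omits; one small correction there is that the event involving $\boldsymbol{\theta}^*$ alone gives uniformity over all $a$ at a fixed $t$, but uniformity over all $t$ needs the anytime form of the Abbasi-Yadkori bound.
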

\begin{proof}
This follows from the generalized Cauchy-Schwarz inequality, $|\mathbf{u}^\top \mathbf{v}| \leq \|\mathbf{u}\|_{\mathbf{A}^{-1}} \|\mathbf{v}\|_{\mathbf{A}}$, by setting $\mathbf{u} = \mathbf{x}_{t,a}$ and $\mathbf{v} = \hat{\boldsymbol{\theta}}_t - \boldsymbol{\theta}^*$.
\end{proof}

\textbf{Physical Interpretation.}
This exploration term carries explicit semantic meaning within the UIS space:
\begin{enumerate}
\item \textbf{Data-Driven Exploration Decay}: The Gram matrix $\mathbf{A}_t$ encodes the density of historical samples in the semantic space. 
When the system frequently samples a specific UIS configuration, the eigenvalues of $\mathbf{A}_t$ along the corresponding feature dimensions increase. 
\item \textbf{Semantic Knowledge Transfer}: Since \algname employs a Transformer encoder to extract action semantic features $\mathbf{s}_a$, the exploration term $\|\mathbf{x}_{t,a}\|_{\mathbf{A}_t^{-1}}$ can recognize actions with structural similarities. 
For instance, even if a specific UIS configuration has never been selected, the system can automatically infer its uncertainty based on its proximity to previously explored actions in the semantic embedding space (e.g., similar model specifications or TTS strategies), thereby accelerating convergence.
\item \textbf{Robustness under Environmental Drift}: In the event of environmental drift, newly emerging feature vectors $\mathbf{s}_{q_t}$ will have low coverage in the historical data. 
This causes $\|\mathbf{x}_{t,a}\|_{\mathbf{A}_t^{-1}}$ to increase instantaneously, triggering a re-exploration mechanism that allows the system to rapidly adapt to the new environment.
\end{enumerate}

\begin{figure}[!t]
\vskip 0.2in
\begin{center}
\centerline{\includegraphics[width=0.9\textwidth]{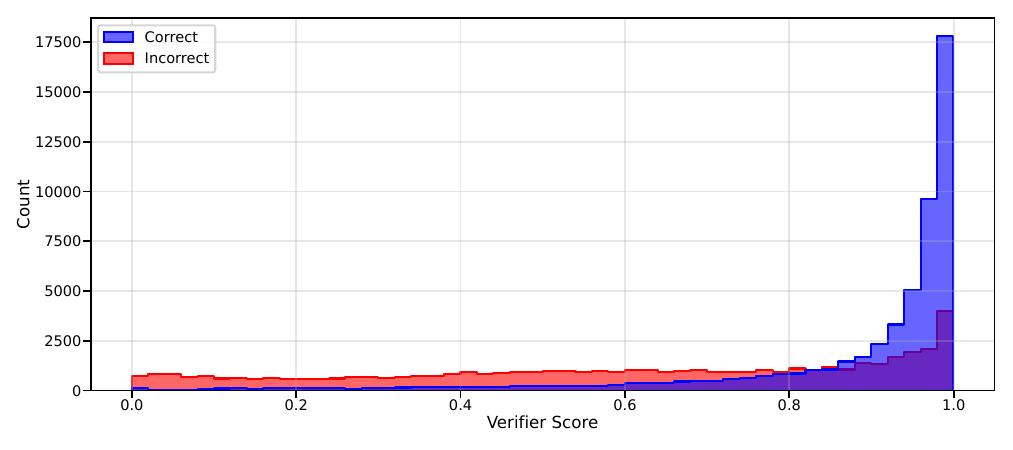}}
\caption{Distribution of verifier scores for correctness assessment. Correct answers (blue) cluster near 1.0 with higher confidence, while incorrect answers (red) show a flatter distribution.}
\label{fig:correctiness_score}
\end{center}
\vskip -0.2in
\end{figure}

\subsection{Verifier Scores as an Indicator of Correctness}
\label{app:score_vs_accuracy}

Statistical analysis based on 105840 large-scale samples (encompassing 210 questions across 168 configurations with 3 sampling iterations each) reveals that verifier scores exhibit significant distributional discretizations across different correctness categories, as shown in \cref{fig:correctiness_score}. 
Correct answers demonstrate a pronounced high-confidence clustering effect, characterized by a high mean score of 0.8818 and a narrow standard deviation of 0.1830, with samples highly skewed toward a narrow frequency band near 1.0. 
This reflects the verifier's consistent preference and stable confidence regarding correct logical paths. 
In contrast, the scoring distribution for incorrect answers is notably flatter and more dispersed, with the mean decreasing to 0.5961 and the standard deviation expanding to 0.2934, indicating substantially higher uncertainty and volatility. 
This systematic shift in distribution morphology suggests that when encountering incorrect answers, the verifier's scoring behavior is driven by uncertainty-induced random diffusion rather than systematic high-score misjudgment. 
These distributional phenomena are further corroborated by robust quantitative metrics: the point-biserial correlation coefficient between Verifier Score and answer correctness reaches 0.5061 ($P < 0.001$), indicating a significant and moderately strong statistical relationship, while the corresponding AUROC (Area Under the Receiver Operating Characteristic curve) of 0.8067 confirms the score's reliable discriminative capacity. 
Consequently, the verifier score provides a stable and exploitable signal for overall ranking and selection, effectively distinguishing correctness.

\subsection{Unified Inference Scaling Cost Model}
\label{app:detailed_eFLOPs}

To evaluate the resource consumption of the \algname framework, we propose a cost model based on equivalent FLOPs (eFLOPs) \cite{sadhukhan2025kinetics}. 
This section formalizes the derivation of the total inference cost $C_{\mathrm{UIS}}$ through a series of definitions, assumptions, and propositions.

\begin{definition}[\textbf{Equivalent FLOPs, eFLOPs}]
\label{def:eflops}
To unify compute-bound and memory-bound overheads in LLM inference, we define eFLOPs as:
\begin{equation}
  \label{eq:eFLOPs}
  \mathrm{eFLOPs} = C_\mathrm{comp} + C_\mathrm{mem} \cdot I,
\end{equation}
where $C_\mathrm{comp}$ denotes the number of floating-point operations, $C_\mathrm{mem}$ represents the memory access volume in bytes, $I$ denotes the arithmetic intensity of the hardware (e.g., NVIDIA A800 80GB SXM GPU), which is defined as the ratio between peak FLOPS and memory bandwidth.
\end{definition}
\begin{definition}[\textbf{Intermediate State}]
\label{def:intermediate_state}
An intermediate state during the inference process is denoted as $s_{i,j,k}$, representing the $k$-th state at the $j$-th step within the $i$-th subtree. 
Specifically, each state $s_{i,j,k}$ is characterized by the following length attributes:
\begin{enumerate}
\item \textbf{Incremental Length} ($\Delta L_{i,j,k}$): The number of tokens generated by $s_{i,j,k}$ during step $j$.
\item \textbf{Initial Context Length} ($L^{(i,j,k)}_{\mathrm{init}}$): The total length of the prefix inherited from its ancestors:
\begin{equation}
L^{(i,j,k)}_{\mathrm{init}} = L_\mathrm{in} + \sum_{j' < j} \Delta L_{i,j',k_{j'}},
\end{equation}
where $k_{j'}$ denotes the index of the ancestor state at step $j'$ on the unique path to $s_{i,j,k}$.
\item \textbf{Final Context Length} ($L^{(i,j,k)}_{\mathrm{final}}$): The total length after completing step $j$, defined as 
\begin{equation}
L^{(i,j,k)}_{\mathrm{final}} = L^{(i,j,k)}_{\mathrm{init}} + \Delta L_{i,j,k}.
\end{equation}
\end{enumerate}
\end{definition}

\begin{table}[!t]
\centering
\small
\caption{Architectural parameters for the candidate models and verifiers, including the Qwen3 series \cite{yang2025qwen3} and the Skywork PRM series \cite{he_2024_16998085}.}
\label{tab:model_parameters}
\begin{tabular}{lccccccc}
\toprule
\textbf{Model Identifier} & $P$ (B) & $N_{\mathrm{layer}}$ & $N_{\mathrm{q}}$ & $N_{\mathrm{kv}}$ & $d_{\mathrm{head}}$ & $\mathrm{prec}_{\mathrm{p}}$ & $\mathrm{prec}_{\mathrm{kv}}$ \\ 
\midrule
\multicolumn{8}{@{}l}{\textit{Candidate Models}} \\
Qwen3-0.6B          & 0.75  & 28 & 16 & 8 & 128 & 2 & 2 \\
Qwen3-1.7B          & 2.03  & 28 & 16 & 8 & 128 & 2 & 2 \\
Qwen3-4B            & 4.02  & 36 & 32 & 8 & 128 & 2 & 2 \\
Qwen3-8B            & 8.19  & 36 & 32 & 8 & 128 & 2 & 2 \\
Qwen3-14B           & 14.77 & 40 & 40 & 8 & 128 & 2 & 2 \\
Qwen3-32B           & 32.76 & 64 & 64 & 8 & 128 & 2 & 2 \\
\midrule
\multicolumn{8}{@{}l}{\textit{Verifiers}} \\
Skywork-o1-Open-PRM-Qwen-2.5-1.5B & 1.54  & 28 & 12 & 2 & 128 & 2 & 2 \\
Skywork-o1-Open-PRM-Qwen-2.5-7B & 7.61  & 28 & 28 & 4 & 128 & 2 & 2 \\
\bottomrule
\end{tabular}
\vskip -0.1in
\end{table}

To maintain theoretical tractability while capturing the advanced characteristics of modern inference engines (e.g., vLLM), we establish the following assumptions:
\begin{assumption}[\textbf{Atomic Cost Components}]\label{ass:atomic_cost_components}The inference cost of LLM is fundamentally decoupled into four atomic components. 
Given the batch size $b$, the model-specific architectural parameters including total parameter count $P$, number of layers $N_{\mathrm{layer}}$, number of query heads $N_{\mathrm{q}}$, number of KV heads $N_{\mathrm{kv}}$, and head dimension $d_{\mathrm{head}}$ (detailed configurations for evaluated models are provided in \cref{tab:model_parameters}), and the storage precision $\mathrm{prec}_{\mathrm{p}}$ and $\mathrm{prec}_{\mathrm{kv}}$, we define the following components:
\begin{enumerate}
\item \textbf{Parameter Computation ($f_{\mathrm{p\_comp}}$):} The floating-point operations (FLOPs) required to process a single token through the linear layers:
\begin{equation}
f_{\mathrm{p\_comp}}(b) = 2P \cdot b,
\end{equation}
where the factor of $2$ accounts for the \textit{Fused Multiply-Add (FMA)} operation, representing one multiplication and one addition for each parameter per token.
\item \textbf{Parameter Memory Access ($f_{\mathrm{p\_mem}}$):} The volume of data (Bytes) moved when loading the full model parameters from memory:
\begin{equation}
f_{\mathrm{p\_mem}} = P \cdot \mathrm{prec}_{\mathrm{p}}.
\end{equation}
\item \textbf{Attention Computation ($f_{\mathrm{a\_comp}}$):} The FLOPs required for a single query token to compute attention scores and weighted sums against a context of length $l$:
\begin{equation}
f_{\mathrm{a\_comp}}(b, l) = 4 \cdot b \cdot l \cdot N_{\mathrm{layer}} \cdot N_{\mathrm{q}} \cdot d_{\mathrm{head}},
\end{equation}
where the factor of $4$ accounts for two distinct matrix multiplication operations within the attention mechanism ($\mathbf{Q \cdot K^\top}$ and $\mathbf{S \cdot V}$), each contributing 2 FLOPs per element-wise dimension.
\item \textbf{Attention Memory Access ($f_{\mathrm{a\_mem}}$):} The volume of data (Bytes) corresponding to the KV cache of length $l$:
\begin{equation}
f_{\mathrm{a\_mem}}(b, l) = 2 \cdot b \cdot l \cdot N_{\mathrm{layer}} \cdot N_{\mathrm{kv}} \cdot d_{\mathrm{head}} \cdot \mathrm{prec}_{\mathrm{kv}},
\end{equation}
where the factor of $2$ accounts for Key Cache and Value Cache.
\end{enumerate}
\end{assumption}
\begin{assumption}[\textbf{Advanced Engine Features}]
\label{ass:adavanced_enging_features}
We assume the inference engine supports the following advanced features: 
\begin{enumerate}
\item \textbf{Prefix Sharing}: Multiple concurrent reasoning branches can logically share the same physical KV cache of their common prefix to minimize memory redundancy. 
\item \textbf{Prefix Caching}: KV caches for common prefixes are automatically retained in memory and reused across discrete inference steps to avoid redundant computation.
\item \textbf{Dynamic Batching}: The engine supports real-time adjustment of the effective batch size as individual sequences within a reasoning step terminate at different lengths.
\end{enumerate}
Note that the optimizations of prefix sharing and prefix caching are limited to generative architectures (e.g., model $M_t$), whereas they remain inapplicable to discriminative models such as the verifier.
\end{assumption}

Under the support of dynamic batching (Assumption \ref{ass:adavanced_enging_features}), we can now formally define the time-varying characteristics of tree-based decoding.
\begin{definition}[\textbf{Effective Batch Size} $b_j(n)$]
\label{def:effectivae_batch_size}
In a step-synchronous reasoning step $j$, we define the effective batch size at token position $n$ as 
\begin{equation}
b_j(n) = \sum_{i,k} \mathbf{1}[\Delta L_{i,j,k} \ge n].
\end{equation}
\end{definition}
\begin{definition}[\textbf{Average Context Length} $\bar{L}_j(n)$]
\label{def:average_context_length}
In step $j$, we define the average context length at token position $n$ as 
\begin{equation}
\bar{L}_j(n) = \frac{1}{b_j(n)} \sum_{i,k} (L^{(i,j,k)}_{\mathrm{init}} + n) \cdot \mathbf{1}[\Delta L_{i,j,k} \ge n].
\end{equation}
\end{definition}
\begin{definition}[\textbf{Maximum Decoding Step} $N_j$]
\label{def:maximum_decoding_step}
In step $j$, we define the maximum decoding step as 
\begin{equation}
N_j = \max_{i,k} \Delta L_{i,j,k}.
\end{equation}
\end{definition}

Guided by the eFLOPs principle (Definition \ref{def:eflops}), we derive the costs for each inference stage by identifying their unique operational characteristics.
\begin{proposition}[\textbf{Prefill Phase Cost}]
\label{pro:prefill_cost}
Given an input sequence of length $L_{\mathrm{in}}$, the cost $C_{\mathrm{prefill}}(1, L_{\mathrm{in}})$ for the shared prefix processing is defined as:
\begin{equation}
C_{\mathrm{prefill}}(1, L_{\mathrm{in}}) = L_{\mathrm{in}} \cdot f_{\mathrm{p\_comp}}(1) + \sum_{i=1}^{L_{\mathrm{in}}} f_{\mathrm{a\_comp}}(1, i) + \left( f_{\mathrm{p\_mem}} + f_{\mathrm{a\_mem}}(1, L_{\mathrm{in}}) \right) \cdot I
\end{equation}
\end{proposition}
\begin{proof}
Under prefix sharing mechanism (Assumption \ref{ass:adavanced_enging_features}), the prefill phase processes the initial prompt as a single contiguous batch ($b=1$).
\begin{itemize}
\item \textbf{Computation}: The linear projections are executed for all $L_{\mathrm{in}}$ tokens ($L_{\mathrm{in}} \cdot f_{\mathrm{p\_comp}}(1)$). 
Due to the causal mask, attention FLOPs follow a discrete summation over the growing context: $\sum_{i=1}^{L_{\mathrm{in}}} f_{\mathrm{a\_comp}}(1, i)$.
\item \textbf{Memory Access}: Model weights are loaded exactly once ($f_{\mathrm{p\_mem}}$). 
The resulting KV cache for the entire prompt is then serialized to memory ($f_{\mathrm{a\_mem}}(1,L_\mathrm{in})$).
\end{itemize}
\end{proof}
\begin{proposition}[\textbf{Incremental Decoding Cost}]\label{pro:incremental_decoding_cost}
The total decoding cost for step $j$ across $N_j$ token positions is:
\begin{equation}
C_{\mathrm{inc}}^{(j)} = \sum_{n=1}^{N_j} \left( f_{\mathrm{p\_comp}}(b_j(n)) + f_{\mathrm{a\_comp}}(b_j(n), \bar{L}_j(n)) + \left( f_{\mathrm{p\_mem}} + f_{\mathrm{a\_mem}}(b_j(n), \bar{L}_j(n)) \right) \cdot I \right)
\end{equation}
\end{proposition}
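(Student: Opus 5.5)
The argument mirrors the proof of Proposition~\ref{pro:prefill_cost}: we decompose step $j$ into its $N_j$ token positions, account for the cost of one batched decoding iteration at each position $n$ using the atomic components of Assumption~\ref{ass:atomic_cost_components}, and then sum over $n$. Under dynamic batching (Assumption~\ref{ass:adavanced_enging_features}), the sequences still active at position $n$ are exactly those with $\Delta L_{i,j,k}\ge n$. These are decoded together in a single forward pass, so the batch at position $n$ has size $b_j(n)$ by Definition~\ref{def:effectivae_batch_size}. Position $n$ ranges from $1$ to $N_j$ by Definition~\ref{def:maximum_decoding_step}, and no sequence is active beyond $N_j$.

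For a fixed position $n$ we count four contributions.
\begin{itemize}
\item \emph{Parameter computation.} Each of the $b_j(n)$ active tokens passes through the linear layers, which costs $f_{\mathrm{p\_comp}}(b_j(n))=2P\,b_j(n)$.
\item \emph{Parameter memory.} Because the batch shares a single forward pass, the weights are loaded once, which costs $f_{\mathrm{p\_mem}}$ independently of the batch size.
\item \emph{Attention computation.} The active sequence indexed by $(i,k)$ attends over context length $L^{(i,j,k)}_{\mathrm{init}}+n$ and costs $4\,(L^{(i,j,k)}_{\mathrm{init}}+n)N_{\mathrm{layer}}N_{\mathrm{q}}d_{\mathrm{head}}$. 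Since $f_{\mathrm{a\_comp}}(b,l)$ is linear in $l$, summing over active sequences gives $4\,b_j(n)\bar L_j(n)N_{\mathrm{layer}}N_{\mathrm{q}}d_{\mathrm{head}}=f_{\mathrm{a\_comp}}(b_j(n),\bar L_j(n))$, using Definition~\ref{def:average_context_length}.
\item \emph{Attention memory.} The KV-cache reads are handled in the same way: by linearity in $l$, the per-sequence terms sum to $f_{\mathrm{a\_mem}}(b_j(n),\bar L_j(n))$.
\end{itemize}
Converting the memory terms with Definition~\ref{def:eflops} (multiplying bytes by $I$) and summing over $n=1,\dots,N_j$ gives the stated formula.

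The main obstacle is the KV-memory term under prefix sharing. If several branches share a common prefix, the physical cache read might be smaller than $b_j(n)\bar L_j(n)$. I would justify the formula as the per-sequence (unshared-read) accounting: prefix sharing reduces storage, but in standard paged-attention kernels each query still reads its full logical context. Under this convention the expression is either exact or, at worst, an upper bound on the actual read volume. I would state this convention explicitly as part of the modeling assumption, since this step is an accounting choice rather than a derivation. The remaining steps are direct substitutions.
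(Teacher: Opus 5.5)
Your proposal is correct and follows essentially the same route as the paper. You account per token position for $b_j(n)$ tokens through the linear layers, one weight load per decoding iteration, and a full read of the active KV cache, then sum over $n\le N_j$; your linearity argument turning the per-sequence attention terms into $f_{\mathrm{a\_comp}}(b_j(n),\bar L_j(n))$ and $f_{\mathrm{a\_mem}}(b_j(n),\bar L_j(n))$ only spells out what the paper asserts, and your unshared-read convention matches the paper's ``full retrieval of the active KV cache.''
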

\begin{proof}
Enabled by prefix caching (Assumption \ref{ass:adavanced_enging_features}), decoding is performed as an incremental step-by-step process.
\begin{itemize}
\item \textbf{Computation}: At each position $n$, $b_j(n)$ new tokens are projected ($f_{\mathrm{p\_comp}}$) and attend to their respective historical contexts of average length $\bar{L}_j(n)$ ($f_{\mathrm{a\_comp}}$).
\item \textbf{Memory Access}: Since decoding is memory-bound, each step $n$ incurs a mandatory reload of model weights ($f_{\mathrm{p\_mem}}$) and a full retrieval of the active KV cache ($f_{\mathrm{a\_mem}}$) from VRAM.
\end{itemize}
\end{proof}
\begin{proposition}[\textbf{Verification Cost}]
\label{pro:verification_cost}
The verification cost at step $j$ is:
\begin{equation} 
C^{(j)}_{\mathrm{ver}} = \sum_{i,k} \left( L^{(i,j,k)}_{\mathrm{final}} \cdot f_{\mathrm{p\_comp}}(1) + \sum_{n=1}^{L^{(i,j,k)}_{\mathrm{final}}} f_{\mathrm{a\_comp}}(1, n) \right) \ + I \cdot \left( f_{\mathrm{p\_mem}} + \sum_{i,k} f_{\mathrm{a\_mem}}(1, L^{(i,j,k)}_{\mathrm{final}}) \right) 
\end{equation}
\end{proposition}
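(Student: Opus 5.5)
The plan mirrors the proofs of Propositions~\ref{pro:prefill_cost} and~\ref{pro:incremental_decoding_cost}. I apply the eFLOPs principle (Definition~\ref{def:eflops}) and separate the verifier's work at step $j$ into a computation part and a memory-access part. All four atomic components of Assumption~\ref{ass:atomic_cost_components} are instantiated with the verifier's architectural parameters from \cref{tab:model_parameters}, not those of $M_t$. The key modeling input is the final clause of Assumption~\ref{ass:adavanced_enging_features}. The verifier is discriminative, so neither prefix sharing nor prefix caching applies. Each candidate state $s_{i,j,k}$ produced at step $j$ must therefore be scored by a fresh full forward pass over its entire sequence, whose length is $L^{(i,j,k)}_{\mathrm{final}}$ by Definition~\ref{def:intermediate_state}.

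\textbf{Computation term.} For a single state $s_{i,j,k}$, the verifier runs a prefill-style pass with batch size $b=1$ over $L^{(i,j,k)}_{\mathrm{final}}$ tokens.
\begin{itemize}
\item The linear layers contribute $L^{(i,j,k)}_{\mathrm{final}}\cdot f_{\mathrm{p\_comp}}(1)$.
\item Under the causal mask, the token at position $n$ attends to a context of length $n$. Summing over positions gives the attention FLOPs $\sum_{n=1}^{L^{(i,j,k)}_{\mathrm{final}}} f_{\mathrm{a\_comp}}(1,n)$.
\end{itemize}
This is exactly the computation part of Proposition~\ref{pro:prefill_cost} with $L_{\mathrm{in}}$ replaced by $L^{(i,j,k)}_{\mathrm{final}}$. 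Since no prefix is shared, the total computation is the sum of these per-state terms over all $(i,k)$ generated at step $j$, which matches the first bracket of the statement.

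\textbf{Memory term.} I argue that all candidates of step $j$ are verified together once generation for that step completes; this is the synchronized Process Verification stage of \cref{sec:executing_inference}. Consequently the verifier weights are loaded from memory only once per step, contributing a single $f_{\mathrm{p\_mem}}$ and not one per state. Each state's KV cache, however, is separate and is written out in full, contributing $f_{\mathrm{a\_mem}}(1,L^{(i,j,k)}_{\mathrm{final}})$, which again parallels the prefill proof. Summing these and multiplying by $I$ gives the second term of the statement.

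\textbf{Main obstacle.} The delicate step is justifying why the weight load appears once while the KV-cache memory and all FLOPs appear per state. This asymmetry comes from the step-synchronous batched execution of the verifier combined with the absence of prefix reuse. Step-synchronous batching lets one weight load serve every candidate. Without prefix reuse, no candidate can borrow another's cached keys and values, so each state's KV cache and all of its FLOPs are counted separately. I would state this explicitly as a consequence of Assumption~\ref{ass:adavanced_enging_features} together with the step structure of \cref{sec:executing_inference}. Once that is established, combining the two terms gives the claimed formula for $C^{(j)}_{\mathrm{ver}}$.
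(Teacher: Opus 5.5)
Your proposal is correct and follows the paper's own justification essentially step for step. Each branch $(i,k)$ is scored by an independent prefill-style forward pass over $L^{(i,j,k)}_{\mathrm{final}}$ tokens, since prefix sharing and caching do not apply to the discriminative verifier; the verifier weights are loaded once per verification task, while the KV volume is summed across all branches. Your explicit remark that the atomic components must be instantiated with the verifier's architectural parameters rather than those of $M_t$ is a useful clarification that the paper leaves implicit.
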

\begin{proof}
The verifier functions as a discriminative model requiring a complete forward pass over each sequence.
\begin{itemize}
\item \textbf{Computation}: For each branch $(i,k)$, the verifier computes the full log-likelihood of the final sequence $L^{(i,j,k)}_{\mathrm{final}}$, treating it as a new prefill-style operation (Assumption \ref{ass:adavanced_enging_features}).
\item \textbf{Memory Access}: Verifier weights are loaded once per verification task ($f_{\mathrm{p\_mem}}$). However, since branches are evaluated as independent contexts in discriminative mode, the memory cost includes the total volume of KV data processed across all branches.
\end{itemize}
\end{proof}
\begin{remark}
The total inference cost $C_{\mathrm{UIS}}$ follows an additive decomposition:
\begin{equation}
C_{\mathrm{UIS}} = C_{\mathrm{prefill}}(L_\mathrm{in}) + \sum_{j=1}^{H} \left[ C_{\mathrm{inc}}^{(j)} + C_{\mathrm{ver}}^{(j)} \right].
\end{equation}
\end{remark}


\end{document}